\newtheorem{thm}{Theorem}
\newtheorem{lemma}{Lemma}
\newtheorem*{dfn}{Definition}
\icmltitlerunning{Learning Ordered Representations with Nested Dropout}
\begin{document} 

\twocolumn[
\icmltitle{Learning Ordered Representations with Nested Dropout}

\icmlauthor{Oren Rippel}{rippel@math.mit.edu}
\icmladdress{Graduate Program in Applied Mathematics, MIT; Harvard University}

\icmlauthor{Michael A. Gelbart}{mgelbart@seas.harvard.edu}
\icmladdress{Graduate Program in Biophysics,
Harvard University}

\icmlauthor{Ryan P. Adams}{rpa@seas.harvard.edu}
\icmladdress{School of Engineering and Applied Sciences,
Harvard University}

\vskip 0.3in
]

\begin{abstract} 

In this paper, we study ordered representations of data in which different dimensions have different degrees of importance. To learn these representations we introduce \emph{nested dropout}, a procedure for stochastically removing coherent nested sets of hidden units in a neural network. We first present a sequence of theoretical results in the simple case of a semi-linear autoencoder.  We rigorously show that the application of nested dropout enforces identifiability of the units, which leads to an exact equivalence with PCA.  We then extend the algorithm to deep models and demonstrate the relevance of ordered representations to a number of applications.  Specifically, we use the ordered property of the learned codes to construct hash-based data structures that permit very fast retrieval, achieving retrieval in time logarithmic in the database size and independent of the dimensionality of the representation. This allows codes that are hundreds of times longer than currently feasible for retrieval.  We therefore avoid the diminished quality associated with short codes, while still performing retrieval that is competitive in speed with existing methods.  We also show that ordered representations are a promising way to learn adaptive compression for efficient online data reconstruction. 
\end{abstract}

\section{Introduction}


The automatic discovery of representations is an increasingly important aspect of machine learning, motivated by a variety of considerations.  For example, feature extraction is often a critical first step for supervised learning procedures. Representation learning enables one to avoid explicit feature engineering; indeed, approaches to deep feature learning have often found representations that outperform their hand-crafted counterparts \citep[e.g.,][]{cnn,hinton_science,ae, coates_analysis}.   In other situations, unsupervised representation learning is useful for finding low-dimensional manifolds for visualization \citep[e.g.,][]{isomap, lle, t-sne}.  There has also been increasing interest in exploiting such representations for information retrieval, leveraging the ability of unsupervised learning to discover compact and concise codes that can be used to build efficient data structures \citep[e.g.,][]{spectral_hashing, semantic, deep_ae}.

One frustration associated with current representation learning techniques, however, is redundancy from non-identifiability in the resulting encoder/decoder.  That is, under standard models such as autoencoders, restricted Boltzmann machines, and sparse coding, any given solution is part of an \emph{equivalence class} of solutions that are equally optimal.  This class emerges from the invariance of the models to various transformations of the parameters.  Permutation is one clear example of such a transformation, leading to a combinatorial number of equivalent representations for a given dataset and architecture.  There exist many more kinds of redundancies as well; the optimality of an autoencoder solution is preserved under any invertible linear transformation of the innermost set of weights \citep{autoassociation}. This degeneracy also poses a difficulty when comparing experiments, due to the lack of repeatability: a solution attained by the optimization procedure is extremely sensitive to the choice of initialization. 

This large number of equivalent representations has an advantage, however: it provides flexibility in architecture design. This freedom allows us to impose desirable structural constraints on the learned representations, without compromising their expressiveness.  These constraints can imbue a number of useful properties, including the elimination of permutation non-identifiability.  In this work we propose one such structural constraint: we specify \emph{a priori} the quantity of information encapsulated in each dimension of the representation. This choice allows us to order the representation dimensions according to their information content.

The intuition behind our proposed approach to learning ordered representations is to train models such that the information contained in each dimension of the representation decreases as a function of the dimension index, following a pre-specified decay function. To this end, we introduce the \emph{nested dropout} algorithm.  As with the original dropout formulation \citep{hinton_do}, nested dropout applies a stochastic mask over models.  However, instead of imposing an independent distribution over each individual unit in a model, it assigns a distribution over nested subsets of representation units. More specifically, given a representation space of dimension~$K$, we define a distribution~$p_B\cd$ over the representation index subsets~${S_b=\{1,\ldots, b \}}$,~${b=1,\ldots, K}$. This has the property that if the $j$-th unit appears in a particular mask, then so do all ``earlier'' units ${1,\ldots, j-1}$, allowing the $j$-th unit to depend on them. This nesting leads to an inherent ordering over the representation dimensions. The distribution $p_B\cd$ then governs the information capacity decay by modulating the relative frequencies of these masks.

We motivate such ordered representations in several ways:

\paragraph{Identifiability} As discussed above, many current representation learning techniques suffer from non-identifiability of the solutions. We can remedy this by introducing strict representation ordering, which enforces distinguishability. We rigorously demonstrate this for the simple case of a semi-linear autoencoder. We prove that the application of nested dropout leads to a significant reduction in the solution space complexity without harming the solution quality. Under an additional weak constraint, we further prove that the model has a single and unique global optimum. We show that this solution is exactly the set of eigenvalues of the covariance matrix of the data, ordered by eigenvalue magnitude. This demonstrates exact equivalence between semi-linear nested dropout autoencoders and principal component analysis (PCA).

\paragraph{Fast retrieval} Current information retrieval procedures suffer from an intrinsic tradeoff between search speed and quality: representation dimensionality and dataset size must be sacrificed to gain search tractability (\citet{hashes} offers an excellent overview of modern retrieval procedures). Given a query datum, a na\"{i}ve brute force retrieval based on Hamming distance requires a linear scan of the database, which has complexity $\mcO\left(KN\right)$ where $K$ is the code length and $N$ the database size.  Semantic hashing \citep{semantic} retrieves examples within a Hamming neighborhood of radius $R$ by directly scanning through all memory locations associated with them. This results in retrieval time complexity~$\mcO(\binom{K}{R})$. While this is independent of the database size, it grows rapidly in $K$ and therefore is computationally prohibitive even for codes tens of bits long; code length of 50 bits, for example, requires a petabyte of memory be addressed.  Moreover, as the code length increases, it becomes very likely that many queries will not find any neighbors for any feasible radii. Locality sensitive hashing \citep{lsh} seeks to preserve distance information by means of random projections; however, this can lead to very inefficient codes for high input dimensionality. 

By imposing an ordering on the information represented in a deep model, we can learn hash functions that permit efficient retrieval.  Because the importance of each successive coding dimension decays as we move through the ordering, we can naturally construct a binary tree data structure on the representation to capture a coarse-to-fine notion of similarity.  This allows retrieval in time that is logarithmic with the dataset size and \emph{independent} of the representation space dimensionality: the retrieval procedure adaptively selects the minimum number of code bits required for resolution. This enables very fast retrieval on large databases without sacrificing representation quality: we are able to consider codes hundreds of times longer than currently feasible with existing retrieval methods. For example, we perform retrieval on a dataset of a million entries of code length $2048$ in an average time of $200 \mu$s per query---15,000 times faster than a linear scan or semantic hashing. 

\paragraph{Adaptive compression} Ordered representations can also be used for ``continuous-degradation'' lossy compression systems: they give rise to a continuous range of bitrate/quality combinations, where each additional bit corresponds to a small incremental increase in quality. This property can in principle be applied to problems such as video streaming. The representation only needs to be encoded a single time; then, users of different bandwidths can be adaptively sent codes of different length that exactly match their bitrates. The inputs can then be reconstructed optimally for the users' channel capacities.
\label{introduction}

\section{Ordering with nested dropout}

Dropout \citep{hinton_do} is a regularization technique for neural networks that adds stochasticity to the architecture during training. At each iteration, unbiased coins are flipped independently for each unit in the network, determining whether it is ``dropped'' or not. Every dropped unit is deleted from the network for that iteration, and an optimization step is taken with respect to the resulting network. 

Nested dropout diverges from this in two main ways. First, we only drop units in the representation space. Second, instead of flipping independent coins for different units, we instead assign a prior distribution $p_B\cd$ over the representation \emph{indices} $1,\ldots,K$. We then sample an index~${b\sim p_B\cd}$ and drop units ${b+1,\ldots,K}$. The sampled units then form nested subsets: if unit $j$ appears in a network sample, then so do units $1,\ldots,j-1$. This nesting results in an inherent importance ranking of the representation dimensions, as a particular unit can always rely on the presence of its predecessors. For $p_B\cd$ we select a geometric distribution: ${p_B(b)=\rho^{b-1} (1-\rho)}$. We make this choice due to the exponential decay of this distribution and its memoryless property (see Section \ref{implementation}).

Our architecture resembles an autoencoder in its parametric composition of an encoder and a decoder. We are given a set of~$N$ training examples~${\{\rmby_{n}\}_{n=1}^{N}}$ lying in space~${\scrY\subseteq\reals^D}$~. We then transform the data into the \emph{representation space}~${\scrX\subseteq\reals^K}$ via a parametric transformation~${\bbf_{\bTheta}:\scrY\to\scrX}$. We denote this function as the \emph{encoder}, and label the representations as~${\{\rmbx_n\}_{n=1}^N\subset\scrX}$. The \emph{decoder} map~${\bg_{\bPsi}:\scrX\to\scrY}$ then reconstructs the inputs from their representations as~$\{\hat{\rmby}_n\}_{n=1}^N$.

\paragraph{A single nested dropout sample}
Let us assume that we sample some~${b\sim p_B\cd}$ and drop the last~${K-b}$ representation units; we refer to this case as the \emph{$b$-truncation}. This structure is equivalent to an autoencoder with a representation layer of dimension $b$. For a given representation~${\rmbx\in\reals^K}$, we define~$\rmbx\up{b}$ as the truncation of vector~$\rmbx$, namely a copy of~$\rmbx$ where the last~${K-b}$ units are removed, or are equivalently set to~$0$.

Denoting the reconstruction of the $b$-truncation as~${\hat{\rmby}\up{b}=\bg_{\bPsi}(\bbf_{\bTheta}(\rmby)\up{b})}$, the reconstruction cost function associated with a $b$-truncation is then
\begin{equation}
C\up{b}(\bTheta,\bPsi) = \frac{1}{N}\sum_{n=1}^N \mathscr{L}\left(\rmby_n, \hat{\rmby}_{n\downarrow b}\right)\;.
\end{equation}
In this work, we take the reconstruction loss $\mathscr{L}(\cdot,\cdot)$ to be the $L_2$ norm. Although we write this cost as a function of the full parametrization $(\bTheta, \bPsi)$, due to the truncation only a subset of the parameters will contribute to the objective.

\paragraph{The nested dropout problem}

Given our distribution~$p_B\cd$, we consider the mixture of the different $b$-truncation objectives: 
\begin{align}
C(\bTheta,\bPsi)\!&=\!\bbE_B\left[C\up{b}(\bTheta,\bPsi)\right]\!=\!
   \sum_{b=1}^K p_B(b)C\up{b}(\bTheta,\bPsi).
\end{align}
We formulate the \emph{nested dropout} problem as the optimization of this mixture with respect to the model parameters:
\begin{eqnarray}
(\bTheta^*,\bPsi^*)=\arg\min_{\bTheta,\bPsi} \;\; C(\bTheta,\bPsi)\;.\label{ndo_problem}
\end{eqnarray}
\subsection{Interpretation}

Nested dropout has a natural interpretation in terms of information content in representation units. It was shown by \citet{ae} that training an autoencoder corresponds to maximizing a lower bound on the mutual information~$\mcI(\by;\bx)$ between the input data and their representations. Specifically, the objective of the $b$-truncation problem can be written in the form
\begin{align}
C\up{b}(\bTheta,\bPsi) \approx \bbE_{\rmby} \left[ -\log p_{\bY | \bX\up{b}} \left(\rmby\,|\,\bbf_{\bTheta}(\rmby)\up{b}\,;\,\bPsi \right) \right]\label{ae_bound}
\end{align}
where we assume our data are sampled from the true distribution~$\pY\cd$. The choice~$p_{\bY | \bX} \left(\rmby\,|\,\rmbx\,;\,\bPsi\right)=\mcN(\rmby\,;\,\bg_{\bPsi}(\rmbx),\sigma^2 \mathbb{I}_D )$, for example, leads to the familiar autoencoder~$L_2$~reconstruction penalty.

Now, define $\tilde{\mcI}_b(\by;\bx):=-C\up{b}(\bTheta,\bPsi)\leq\mcI(\by;\bx)$ as the approximation of the true mutual information which we maximize for a given $b$. Then we can write the (negative) nested dropout problem in the form of a telescopic sum:
\begin{align}
-C(\bTheta,\bPsi) &= \sum_{b=1}^K p_B(b)\tilde{\mcI}_b(\by;\bx)\label{info} \\
&=\tilde{\mcI}_1(\by;\bx)+\sum_{b=2}^{K} \left[ F_B(K) - F_B(b-1) \right] \Delta_b\nonumber \;,
\end{align}
where~${F_B(b)=\sum_{b'=1}^b p_B(b')}$~is the cumulative distribution function of $p_B\cd$~, and $\Delta_b := \tilde{\mcI}_b(\by;\bx) - \tilde{\mcI}_{b-1}(\by;\bx)$ is the marginal information gained from increasing the representation dimensionality from $b$ units to~${b+1}$. 

This formulation provides a connection between the nested dropout objective and the optimal distribution of information across the representation dimensions. Note that the coefficients~${F_B(K) - F_B(b)}$~of the marginal mutual information are positive and monotonically decrease as a function of~$b$~regardless of the choice of distribution~$p_B\cd$. This establishes the ordering property intuitively sought by the nested dropout idea. We also see that if for some $b$ we have~${p_B(b)=0}$, i.e., index $b$ has no support under $p_B\cd$, then the ordering of representation dimensions~$b$ and~${b-1}$ no longer matters. If we set~${p_B(1)=0,\ldots,p_B(K-1)=0}$ and ~${p_B(K)=1}$, we recover the original order-free autoencoder formulation for~$K$ latent dimensions. In order to achieve strict ordering, then, the only assumption we must make is that~$p_B\cd$ has support over all representation indices. Indeed, this will be a sufficient condition for our proofs in Section \ref{pca}. Equation (\ref{info}) informs us of how our prior choice of $p_B\cd$ dictates the optimal information allocation per unit.

\label{ndo}

\section{Exact recovery of PCA}

In this section, we apply nested dropout to a semi-linear autoencoder. This model has a linear or a sigmoidal encoder, and a linear decoder. The relative simplicity of this case allows us to rigorously study the ordering property implied by nested dropout. 

First, we show that the class of optimal solutions of the nested dropout autoencoder is a subset of the class of optimal solutions of a standard autoencoder. This means that introducing nested dropout does not sacrifice the quality of the autoencoder solution.  Second, we show that equipping an autoencoder with nested dropout significantly constrains its class of optimal solutions. We characterize these restrictions.  Last, we show that under an additional orthonormality constraint, the model features a single, unique solution that is exactly the set of~$K$~eigenvectors with the largest magnitudes arising from the covariance matrix of the inputs, ordered by decreasing eigenvalue magnitude. Hence this recovers the PCA solution exactly. This is in contrast to a standard autoencoder, which recovers the PCA solution up to an invertible linear map.

\subsection{Problem definitions and prior results}
\label{defns}

\paragraph{The standard linear autoencoder problem} Given our inputs, we apply the linear encoder~${\bbf_{\bTheta}(\rmby):=\bOmega\rmby+\bomega}$ with parameters~${\bOmega\in\reals^{K\times D}}$ and bias vector~${\bomega\in\reals^{K}}$ for~${K\leq D}$. Our proofs further generalize to sigmoidal nonlinearities applied to the output of the encoder, but we omit these for clarity. The decoder map~${\bg_{\bPsi}:\scrX\to\scrY}$ is similarly taken to be~${\bg_{\bPsi}(\rmbx):=\bGamma\rmbx+\bgamma}$ with parameters~${\bGamma\in\reals^{D\times K}}$ and~${\bgamma\in\reals^{D}}$. We also define the design matrices~$\bY$ and $\bX$ whose columns consist of the observations and their representations, respectively.

The reconstruction of each datum is then defined as the composition of the encoder and decoder maps. Namely, $\hat{\rmby}_n=\bGamma(\bOmega\rmby_n+\bomega)+\bgamma\; \forall n=1,\ldots,N$. A semi-linear autoencoder seeks to minimize the reconstruction cost
\begin{eqnarray}
C(\bTheta,\bPsi) &=& \sum_{n=1}^N \norm{\rmby_n-{\bg_{\bPsi}}(\bbf_{\bTheta}(\rmby_n)) }^2 \\
  &=&\norm{\bY-(\bGamma(\bOmega\bY+\bomega)+\bgamma) }_F^2
\end{eqnarray}
where by $\norm{\cdot}_F$ we denote the Frobenius matrix norm. From this point on, without loss of generality we assume that $\bomega=\bzero$, $\bgamma=\bzero$, and that the data is zero-centered. All our results hold otherwise, but with added shifting constants.

\paragraph{A single $b$-truncation problem} We continue to consider the $b$-truncation problem, where the last $K-b$ units of the representation are dropped. As before, for a given representation $\rmbx\in\reals^K$, we define $\rmbx\up{b}$ to be the truncation of vector $\rmbx$. Defining the truncation matrix~${\bJ_{m\rightarrow n}\in\reals^{n\times m}}$ as~${[\bJ_{m\rightarrow n}]_{ab}=\delta_{ab}}$, then~${\rmbx\up{b}=\bJ_{K\rightarrow b}\rmbx}$. The decoder is then written as ${{\bg_{\bPsi}}\up{b}(\rmbx\up{b})=\bGamma\up{b}\rmbx\up{b}}$, where we write $\bGamma\up{b}=\bGamma\trJ{K}{b}^T$ in which the last $K-b$ columns of $\bGamma$ are removed. The reconstruction cost function associated with a $b$-truncation is then
\begin{equation}
C\up{b}(\bTheta\up{b},\bPsi\up{b})=\norm{\bY-\Gamma\up{b}\bX\up{b} }_F^2\;.
\end{equation}
We define ${(\bTheta\up{b}^*,\bPsi\up{b}^*)=\arg\min_{\bTheta\up{b},\bPsi\up{b}}\bC\up{b}(\bTheta\up{b},\bPsi\up{b})}$ to be an optimal solution of the $b$-truncation problem; we label the corresponding optimal cost as~$C\up{b}^*$. Also, let ${\bV_{\bY}=\bY\bY^T}$ be (proportional to) the empirical covariance matrix of~$\{\rmby_n\}_{n=1}^N$ with eigendecomposition~${\bV_{\bY}=\bQ\bSigma^2\bQ^T}$, where $\bSigma^2$ is the diagonal matrix constituting of the eigenvalues arranged in decreasing magnitude order, and~$\bQ$ the orthonormal matrix of the respective eigenvectors. Similarly, let $\bR$ be the orthonormal eigenvector matrix of $\bY^T\bY$, arranged by decreasing order of eigenvalue magnitude. 

The $b$-truncation problem exactly corresponds to the original semi-linear autoencoder problem, where the representation dimension is taken to be $b$ in the first place. As such, we can apply known results about the form of the solution of a standard autoencoder. It was proven in \citet{autoassociation} that this optimal solution must be of the form
\begin{align}
\bX_b^* &= \bT_b \bSigma\up{b} \bR^T &
\bGamma_b^* &= \bQ\up{b}\bT^{-1}_b\label{soln_form}
\end{align}
where ${\bT_b\in\reals^{b\times b}}$ is an invertible matrix, ${\bSigma\up{b}=\trJ{K}{b}\bSigma\in\reals^{b\times D}}$ the matrix with the $b$ largest-magnitude eigenvalues, and ${\bQ\up{b}=\bQ \trJ{K}{b}^T\in\reals^{D\times b}}$ the matrix with the $b$ corresponding eigenvectors. This result was established for an autoencoder of representation dimension $b$; we reformulated the notation to suit the nested dropout problem we define in the next subsection.

It can be observed from Equation~(\ref{soln_form}) that the semi-linear autoencoder has a strong connection to PCA. An autoencoder discovers the eigenvectors of the empirical covariance matrix of $\{\rmby_n\}_{n=1}^N$ corresponding to its $b$ eigenvalues of greatest magnitude; however, this is up to to an invertible linear transformation. This class includes rotations, scalings, reflections, index permutations, and so on. This non-identifiability has an undesirable consequence: it begets a huge class of optimal solutions.

\paragraph{The nested dropout problem} We now introduce the nested dropout problem. Here, we assign the distribution~$b\sim p_B(\cdot)$ as a prior over $b$-truncations. For our proofs to hold our only assumption about this distribution is that it has support over the entire index set, i.e., ${p_B(b)>0,\;\forall b=1,\ldots,K}$. To that end, we seek to minimize the nested dropout cost function, which we define as the mixture of the $K$ truncated models under~$p_B\cd$:
\begin{align}
C(\bTheta,\bPsi) &=\bbE_B \left[\norm{\bY-\Gamma\up{b}\bX\up{b} }_F^2\right]\\
  &= \sum_{b=1}^K p_B(b)\norm{\bY-\Gamma\up{b}\bX\up{b}}_F^2\;.\label{ndo_objective}
\end{align} 

\subsection{The nested dropout problem recovers PCA exactly}\label{pca_subsection}

Below we provide theoretical justification for the claims made in the beginning of this section. All of the proofs can be found in the appendix of this paper.

\begin{thm}\label{everysoln}
Every optimal solution of the nested dropout problem is necessarily an optimal solution of the standard autoencoder problem.
\end{thm}

\begin{dfn}
We define matrix ${\bT\in\reals^{K\times K}}$ to be \emph{commutative in its truncation and inversion} if each of its leading principal minors ${\trJ{K}{b}\bT\trJ{K}{b}^T},\; {b=1,\ldots,K}$ is invertible, and the inverse of each of its leading principal minors is equal to the leading principal minor of the inverse $\bT^{-1}$, namely
\begin{align}
\trJ{K}{b}\bT^{-1}\trJ{K}{b}^T &= (\trJ{K}{b}\bT\trJ{K}{b}^T)^{-1}\;.
\end{align}
\end{dfn}

The below theorem, combined with Lemma \ref{lemma}, establishes tight constraints on the class of optimal solutions of the nested dropout problem. For example, an immediate corollary of this is that $\bT$ cannot be a permutation matrix, as for such a matrix there must exist some leading principal minor that is not invertible. 

\begin{thm}\label{soln_constraints}
Every optimal solution of the nested dropout problem must be of the form
\begin{align}
\bX^* &= \bT \bSigma \bR^T &
\bGamma^* &= \bQ \bT^{-1}\;,
\end{align}
for some matrix ${\bT\in\reals^{K\times K}}$ that is commutative in its truncation and inversion.
\end{thm}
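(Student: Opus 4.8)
The plan is to combine Theorem~\ref{everysoln} with a sharper, per-truncation optimality statement. First I would invoke Theorem~\ref{everysoln}: any optimizer $(\bTheta^*,\bPsi^*)$ of the nested dropout objective~(\ref{ndo_objective}) is also optimal for the standard autoencoder, which is precisely the $b=K$ truncation. Applying the known characterization~(\ref{soln_form}) with $b=K$ then immediately yields the claimed form $\bX^* = \bT\bSigma\bR^T$ and $\bGamma^* = \bQ\bT^{-1}$ for some invertible $\bT\in\reals^{K\times K}$. All that remains is to show that this particular $\bT$ is commutative in its truncation and inversion.

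The engine for that is the observation that, at a nested dropout optimum, \emph{every} individual $b$-truncation cost is simultaneously minimized. The key point is that ordinary PCA---that is, the choice $\bT=\mathbb{I}$, giving $\bX=\bSigma\bR^T$ and $\bGamma=\bQ$---makes the $b$-truncation reconstruction $\bQ\up{b}\bSigma\up{b}\bR^T$ the optimal rank-$b$ projection for \emph{all} $b$ at once, so it attains $C\up{b}^*$ for every $b$. Hence the mixture minimum equals $\sum_{b=1}^K p_B(b)\,C\up{b}^*$. Since each term satisfies $C\up{b}(\bTheta^*,\bPsi^*)\ge C\up{b}^*$ and, by assumption, $p_B(b)>0$ for all $b$, equality of the weighted sum forces $C\up{b}(\bTheta^*,\bPsi^*)=C\up{b}^*$ for every $b$. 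This is where the full-support hypothesis on $p_B\cd$ is essential. Consequently the induced truncated pair $(\trJ{K}{b}\bX^*,\,\bGamma^*\trJ{K}{b}^T)$ is itself an optimal solution of the $b$-truncation problem.

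Next I would feed each such truncated solution back through the characterization~(\ref{soln_form}), now at level $b$: there exists an invertible $\bT_b\in\reals^{b\times b}$ with $\trJ{K}{b}\bX^* = \bT_b\,\trJ{K}{b}\bSigma\bR^T$ and $\bGamma^*\trJ{K}{b}^T = \bQ\trJ{K}{b}^T\bT_b^{-1}$. Substituting the global forms $\bX^*=\bT\bSigma\bR^T$ and $\bGamma^*=\bQ\bT^{-1}$ and cancelling the common factors---$\bSigma\bR^T$ on the right (it has full row rank) and $\bQ$ on the left (its columns are orthonormal)---reduces these to $\trJ{K}{b}\bT=\bT_b\trJ{K}{b}$ and $\bT^{-1}\trJ{K}{b}^T=\trJ{K}{b}^T\bT_b^{-1}$. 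Right-multiplying the first by $\trJ{K}{b}^T$ and left-multiplying the second by $\trJ{K}{b}$, and using $\trJ{K}{b}\trJ{K}{b}^T=\mathbb{I}_b$, I obtain $\trJ{K}{b}\bT\trJ{K}{b}^T=\bT_b$ and $\trJ{K}{b}\bT^{-1}\trJ{K}{b}^T=\bT_b^{-1}$. Thus each leading principal minor $\trJ{K}{b}\bT\trJ{K}{b}^T$ equals the invertible $\bT_b$, and its inverse coincides with $\trJ{K}{b}\bT^{-1}\trJ{K}{b}^T$---which is exactly the definition of commutativity in truncation and inversion.

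The main obstacle I anticipate lies in the per-truncation optimality step and the legitimacy of the cancellations it feeds into. Establishing $C\up{b}(\bTheta^*,\bPsi^*)=C\up{b}^*$ cleanly relies on PCA being a simultaneous minimizer of all truncations, which I would justify through the nesting of the optimal rank-$b$ projections. The cancellation of $\bSigma\bR^T$ and of $\bQ$ requires these factors to have the appropriate full rank, which in turn rests on the nondegeneracy of the leading eigenstructure of $\bV_{\bY}$; the delicate case is a degenerate (repeated-eigenvalue) spectrum, where the optimal rank-$b$ reconstruction need not be unique and the factor-level characterization~(\ref{soln_form}) must be applied with care. I would handle this either by invoking a distinct-eigenvalue hypothesis or by arguing that any admissible choice of eigenbasis still yields the same principal-minor relations.
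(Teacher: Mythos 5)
Your argument is correct and follows essentially the same route as the paper's own proof: both use the fact (from the proof of Theorem~\ref{everysoln}) that an optimum of the mixture must exactly minimize every $b$-truncation term, then match the truncated global solution $(\bX\up{b},\bGamma\up{b})$ against the per-level characterization~(\ref{soln_form}) to conclude $\bT_b=\trJ{K}{b}\bT\trJ{K}{b}^T$ and $\bT_b^{-1}=\trJ{K}{b}\bT^{-1}\trJ{K}{b}^T$. Your added care about the rank conditions justifying the cancellation of $\bSigma\bR^T$ and $\bQ$ is a reasonable refinement that the paper leaves implicit, but it does not change the structure of the argument.
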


Denote the column and row submatrices, respectively as $\bA_b=[ T_{1b},  \ldots, T_{(b-1), b}]^T$ and $\bB_b=[ T_{b1}, \ldots, T_{b, (b-1)}]^T$. 

\begin{lemma}\label{lemma}
Let $\bT\in\reals^{K\times K}$ be commutative in its truncation and inversion. Then all the diagonal elements of $\bT$ are nonzero, and for each $b=2,\ldots,K$, either~${\bA_b=\textbf{0}}$ or~${\bB_b=\textbf{0}}$.
\end{lemma}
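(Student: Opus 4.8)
The plan is to translate the abstract commutation property into a concrete relation between \emph{consecutive} leading principal minors, and then read off the structural consequence from the block inversion formula. Write $M_b := \trJ{K}{b}\bT\trJ{K}{b}^T$ for the top-left $b\times b$ block of $\bT$ and $N_b := \trJ{K}{b}\bT^{-1}\trJ{K}{b}^T$ for the top-left $b\times b$ block of $\bT^{-1}$. By the hypothesis that $\bT$ is commutative in its truncation and inversion, every $M_b$ is invertible and $N_b = M_b^{-1}$ for all $b=1,\ldots,K$. The first key observation is that truncation is nested, so $N_{b-1}$ is exactly the top-left $(b-1)\times(b-1)$ block of $N_b$. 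Combining this with $N_{b-1}=M_{b-1}^{-1}$ and $N_b=M_b^{-1}$ yields the crucial identity that the leading $(b-1)\times(b-1)$ block of $M_b^{-1}$ equals $M_{b-1}^{-1}$.

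Next I would expand $M_b$ in block form as $M_b=\begin{pmatrix} M_{b-1} & \bA_b \\ \bB_b^T & T_{bb}\end{pmatrix}$ and apply the block inversion formula. Since $M_{b-1}$ and $M_b$ are both invertible, the Schur complement $s_b := T_{bb}-\bB_b^T M_{b-1}^{-1}\bA_b$ is a nonzero scalar (indeed $\det M_b = s_b\det M_{b-1}$). The top-left block of $M_b^{-1}$ is then $M_{b-1}^{-1}+M_{b-1}^{-1}\bA_b\, s_b^{-1}\bB_b^T M_{b-1}^{-1}$. Setting this equal to $M_{b-1}^{-1}$ from the previous step, then multiplying by $M_{b-1}$ on each side and dropping the nonzero scalar $s_b^{-1}$, leaves $\bA_b\bB_b^T=\mathbf{0}$. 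Because $\bA_b\bB_b^T$ is a rank-one outer product of two vectors, it vanishes precisely when $\bA_b=\mathbf{0}$ or $\bB_b=\mathbf{0}$, which is the second claim of the lemma.

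The diagonal statement then follows cheaply from this dichotomy. For any $b\geq 2$, whichever of $\bA_b,\bB_b$ is zero makes the form $\bB_b^T M_{b-1}^{-1}\bA_b$ vanish, so the Schur complement collapses to $s_b=T_{bb}$; since $s_b\neq 0$ we conclude $T_{bb}\neq 0$, and the base case $b=1$ is immediate because $M_1=T_{11}$ is invertible. I expect the only real obstacle to be the first step: recognizing that the nestedness of the truncation operators $\trJ{K}{b}$ forces the leading block of $M_b^{-1}$ to coincide with $M_{b-1}^{-1}$, which is exactly what converts the inversion hypothesis into a usable algebraic constraint. Everything after that is standard Schur-complement bookkeeping.
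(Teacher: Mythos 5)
Your proof is correct and follows essentially the same route as the paper's: block-decompose the leading principal minor, apply blockwise inversion, and match the top-left block of $(\bT\up{b})^{-1}$ against $(\bT\up{b-1})^{-1}$ to force the rank-one product $\bA_b\bB_b^T$ to vanish. The only cosmetic divergence is the diagonal claim, which the paper obtains by contradiction from a zero row or column of $\bT\up{b}$ while you read it directly off the nonzero Schur complement; both hinge on the same dichotomy.
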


In the result below we see that nested dropout coupled with an orthonormality constraint effectively eliminates non-identifiability. The added constraint pins down any possible rotations and scalings.

\begin{thm}
Under the orthonormality constraint ${\bGamma^T\bGamma=\mathbb{I}_K}$, the nested dropout problem features a unique global optimum, and this solution is exactly the set of the~$K$ top eigenvectors of the covariance of $\bY$, ordered by eigenvalue magnitude. Namely, ${\bX^*=\bSigma\bR^T}, {\bGamma^*=\bQ}$.
\end{thm}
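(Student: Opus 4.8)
The plan is to combine the structural characterization of Theorem~\ref{soln_constraints} with the orthonormality constraint to collapse the single residual degree of freedom $\bT$ down to a signed identity. By Theorem~\ref{soln_constraints}, any optimum has $\bGamma^* = \bQ\bT^{-1}$ and $\bX^* = \bT\bSigma\bR^T$ for some $\bT$ commutative in its truncation and inversion; since the columns of $\bQ$ are orthonormal eigenvectors, $\bQ^T\bQ = \mathbb{I}_K$. First I would substitute this form into the constraint $\bGamma^{*T}\bGamma^* = \mathbb{I}_K$ and compute $\bGamma^{*T}\bGamma^* = (\bT^{-1})^T\bQ^T\bQ\,\bT^{-1} = (\bT\bT^T)^{-1}$. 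Equating to $\mathbb{I}_K$ forces $\bT\bT^T = \mathbb{I}_K$, i.e.\ $\bT$ is orthogonal, so the added constraint pins the remaining transformation down to the orthogonal group.

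The second step is to propagate orthogonality through the truncation--inversion commutativity. Writing $\bT_b = \trJ{K}{b}\bT\trJ{K}{b}^T$ for the $b$-th leading principal block, orthogonality of $\bT$ gives $\bT^{-1} = \bT^T$, so the defining identity $\trJ{K}{b}\bT^{-1}\trJ{K}{b}^T = \bT_b^{-1}$ becomes $\bT_b^T = \bT_b^{-1}$. Hence \emph{every} leading principal block $\bT_b$ is itself orthogonal, for $b = 1,\ldots,K$.

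The third and main step is an induction on $b$ showing that an orthogonal matrix all of whose leading principal blocks are orthogonal must be diagonal with $\pm 1$ entries. The base case $\bT_1 = [T_{11}]$ orthogonal gives $T_{11} = \pm 1$. For the inductive step I would assume $\bT_{b-1}$ is a signed diagonal matrix; since its columns then already have unit norm, orthonormality of the columns of the larger orthogonal block $\bT_b$ forces the extra entries $T_{bj}$ for $j<b$ to vanish, i.e.\ $\bB_b = \mathbf{0}$, and symmetrically orthonormality of its rows forces $\bA_b = \mathbf{0}$; the final column then has $T_{bb} = \pm 1$. (Lemma~\ref{lemma} already guarantees one of $\bA_b,\bB_b$ vanishes, so only the complementary vanishing genuinely needs the orthogonality argument.) This yields $\bT = \mathrm{diag}(s_1,\ldots,s_K)$ with each $s_i \in \{-1,+1\}$; in particular it rules out any permutation, which is exactly what confines each recovered direction to its ordered eigenvector.

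Finally, with $\bT$ a signed identity we have $\bT^{-1} = \bT$, so $\bGamma^* = \bQ\bT$ and $\bX^* = \bT\bSigma\bR^T$, giving reconstruction $\bGamma^*\bX^* = \bQ\bSigma\bR^T$ independent of the signs. The residual factor $\bT$ is precisely the intrinsic sign ambiguity of eigenvectors: flipping the sign of a column of $\bQ$ together with the corresponding row of $\bX$ leaves both the cost and the constraint invariant. Fixing the standard eigenvector sign convention (equivalently, absorbing $\bT$ into $\bQ$) leaves the unique optimum $\bX^* = \bSigma\bR^T$, $\bGamma^* = \bQ$, as claimed. I expect the inductive collapse in the third step --- together with the care needed to phrase uniqueness modulo the unavoidable eigenvector sign --- to be the main obstacle; everything else is linear-algebraic bookkeeping resting on the earlier theorem and lemma.
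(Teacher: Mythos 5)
Your proof is correct and follows essentially the same route as the paper's: derive $\bT^T=\bT^{-1}$ from the orthonormality constraint, propagate orthogonality to every leading principal minor via the truncation--inversion commutativity, and conclude that $\bT$ is diagonal (the paper does this last step by a direct contradiction on a single off-diagonal entry rather than by your induction, but the mechanism---a unit-norm sub-row already exhausting the norm budget---is identical). If anything you are more careful than the paper, which passes from ``diagonal with unit-norm entries'' to $\bT=\mathbb{I}_K$ without acknowledging the residual $\pm 1$ sign ambiguity that you correctly isolate and dispose of by convention.
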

\label{pca}

\section{Training deep models with nested dropout}

\begin{figure}[t!]
    \subfigure[\small Without unit sweeping]{
    \centering
    \includegraphics[width=0.47\columnwidth]{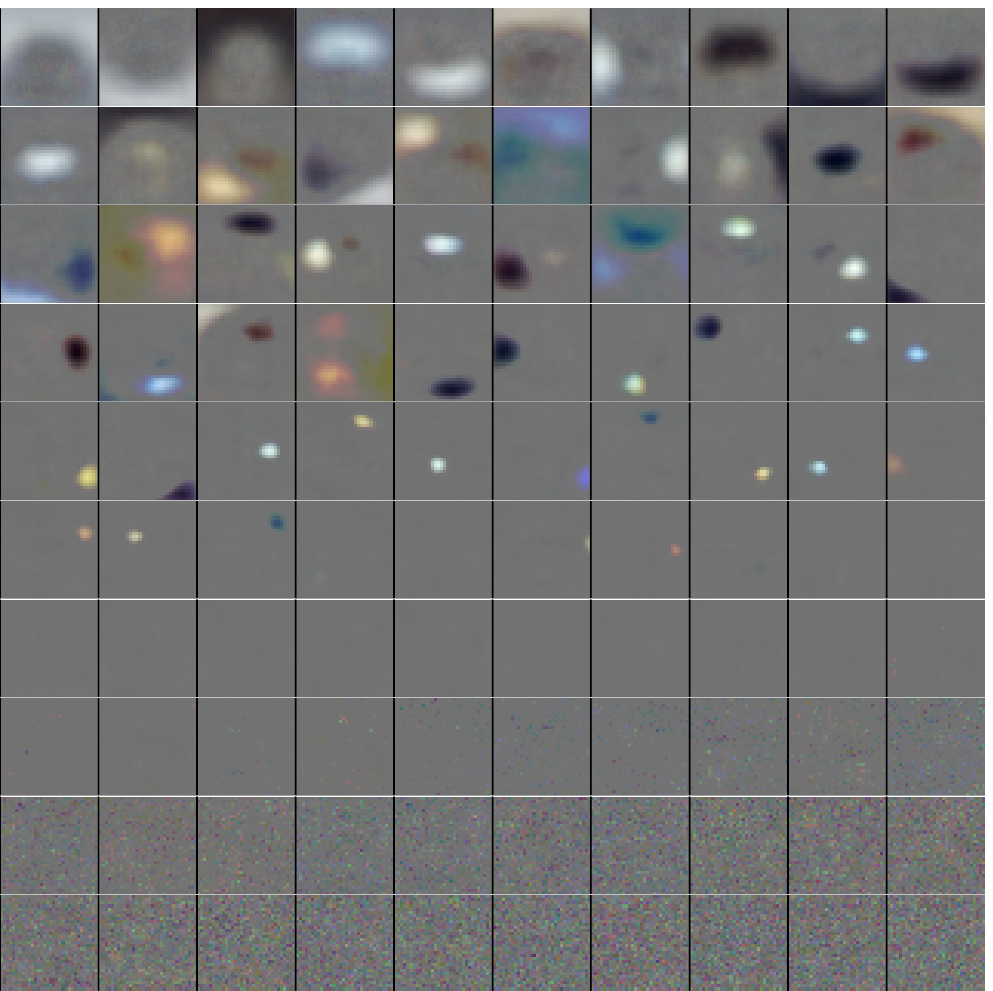}
    \label{sweep_none}}
    \hfill\subfigure[\small With unit sweeping]{
    \centering
    \includegraphics[width=0.47\columnwidth]{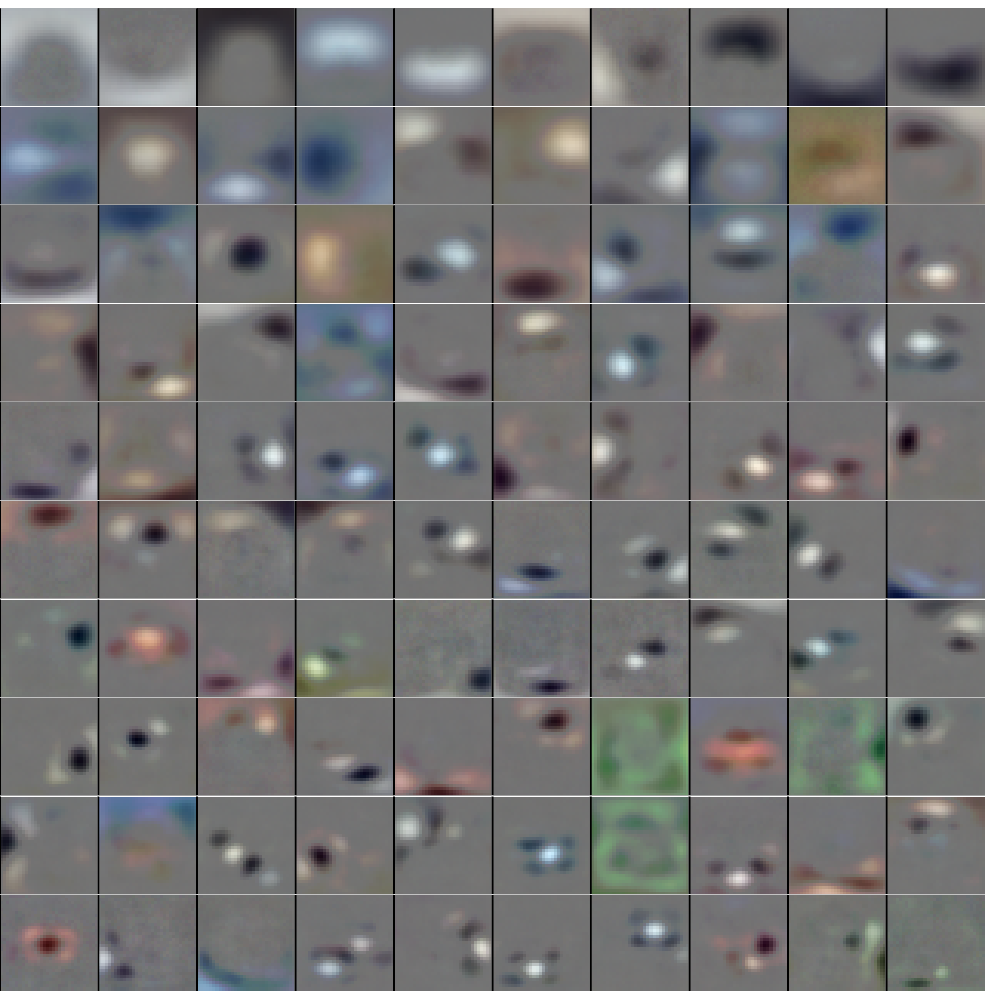}
    \label{sweep_yes}}
    \captionv{The 100 filters learned by a binarized 3072-100-3072 nested dropout autoencoder on the raw CIFAR-10 pixels where $p_B\cd$ is a geometric distribution with rate of 0.9. For this rate, the probability of sampling any index greater than 50 is $\approx 0.005$, and the probability of sampling the 100th unit is $\approx 0.00003$: it is very unlikely to ever sample these without unit sweeping. Note the increase of filter fineness as a function of the index.}
    \label{sweeping}
\end{figure}

In this section we discuss our extension of the nested dropout approach to deep architectures. Specifically, we applied this to deep autoencoders having tens of millions of parameters, which we trained on the 80 Million Tiny Images (80MTI) dataset \citep{80mti} on a cluster of GPUs. Training models with nested dropout introduces a number of unconventional technical challenges. In the proceeding sections we describe these challenges, and discuss strategies to overcome them.

We first describe our general architecture and optimization setup. The 80MTI are 79,302,017 color images of size ${32 \times 32}$.  We pre-processed the data by subtracting from each pixel its mean and normalizing by its variance across the dataset. We optimize our models with the nonlinear conjugate gradients algorithm and select step sizes using a strong Wolfe conditions line search. For retrieval-related tasks, we seek to produce binary representations. In light of this we use rectified linear units for all nonlinarities in our encoder, as we find this leads to better binarized representation (see Subsection \ref{binarization}). \citet{rectifier} features an in-depth discussion and motivation of rectified linear units. We train for 2 epochs on minibatches of size 10,000. We inject noise to promote robustness, as in \cite{ae}; namely, with probability 0.1 we independently corrupt input elements to 0. For all layers other than the representation layer, we apply standard dropout with probability 0.2. At each iteration, we sample nested dropout truncation indices for each example in our minibatch, and take a step with respect to the corresponding network mask. 

\subsection{Unit sweeping for decaying gradients}
By the virtue of the decaying distribution $p_B\cd$, it becomes increasingly improbable to sample higher representation indices during training. As such, we encounter a phenomenon where gradient magnitudes vanish as a function of representation unit index. This curvature pathology, in its raw formulation, means that training representation units of higher index can be extremely slow.

In order to combat this effect, we develop a technique we call \emph{unit sweeping}. The idea stems from the observation that the covariance of two latent units sharply decreases as a function of the of the difference of their indices. When $p_B\cd$ is a geometric distribution, for example, the probability of observing both units~$i$ and~$j$ given that one of them is observed is ${\mathbb{P}[b\!\geq\!\max (i, j) \given b\!\geq\!\min (i, j)]}={\mathbb{P}[b\!\geq\! |i\!-\!j|]}={\rho^{-|i-j|}}$ by the memoryless property of the distribution. In other words, a particular latent unit becomes exponentially desensitized to values of units of higher index. As such, this unit will eventually converge during its training. Upon convergence, then, this unit can be fixed in place and its associated gradients can be omitted. Loosely speaking, this elimination reduces the ``condition number'' of the optimization. Applying this iteratively, we sweep through the latent units, fixing each once it converges. In Figure \ref{sweeping} we compare filters from training a nested dropout model with and without unit sweeping.

\subsection{Adaptive regularization coefficients}
The gradient decay as a function of representation index poses a difficulty for regularization. In particular, the ratio of the magnitudes of the gradients of the reconstruction and the regularization vanishes as a function of the index. Therefore, a single regularization term such as $\lambda \sum_{k=1}^K \norm{\bOmega_k}_{L_1}$ would not be appropriate for nested dropout, since the regularization gradient would dominate the high-index gradients. As such, the regularization must be decoupled as a function of representation index. For weight decay, for example, this would of the form $\sum_{k=1}^K \lambda_k \norm{\bOmega_k}_{L_1}$. Choosing the coefficients $\lambda_k$ manually is challenging, and to that end we assign them adaptively. We do this by fixing in advance the ratio between the magnitude of the reconstruction gradient and the regularization gradient, and choosing the $\lambda_k$ to satisfy this ratio requirement. This corresponds to fixing the relative contributions of the terms at each step in the optimization procedure.

\subsection{Code binarization}\label{binarization}
For the task of retrieval, we would like to obtain binary representations. Several binarization methods have been proposed in prior work \citep{semantic, deep_ae}. We have empirically achieved good performance by tying the weights of the encoder and decoder, and thresholding at the representation layer. Although the gradient itself cannot propagate past this threshold, some signal does: the encoder can be trained since it is linked to the decoder, and its modifications are then reflected in the objective. To attain fixed marginal distributions over the binarized representation units, i.e., ${x_k\sim\textrm{Bern}(\beta)}$ for ${k=1,\ldots, K}$, we compute the $\beta$ quantile for each unit, and use this value for thresholding.

\subsection{Code invariance}
We attain better retrieval results when we demand code invariance.  That is, when we require that similar examples map to similar codes. Inspired by \citet{contractive, higher_contractive}, we perform this regularization stochastically by perturbing each input $\rmby_n$ in our minibatch with some small ${\bvepsilon_n\sim \mcN(\bzero, \bar{\varepsilon}\mathbb{I}_D)}$, and introduce penalty term
\begin{equation}
\mcI = \frac{1}{N}\sum_{n=1}^N \frac{\norm{\bbf_{\bTheta}(\rmby_n+\bvepsilon_n)-\bbf_{\bTheta}(\rmby_n)}^2}{\norm{\bvepsilon_n}^2}\;.
\end{equation}
It can be shown via Taylor expansion that this corresponds to a stochastic formulation of the regularization of the Frobenius norm of the Jacobian of $\bbf_{\bTheta}\cd$.

\begin{figure}[t!]
\parbox[t]{\columnwidth}{
\centering
\includegraphics[width=0.19\columnwidth,height=0.19\columnwidth]{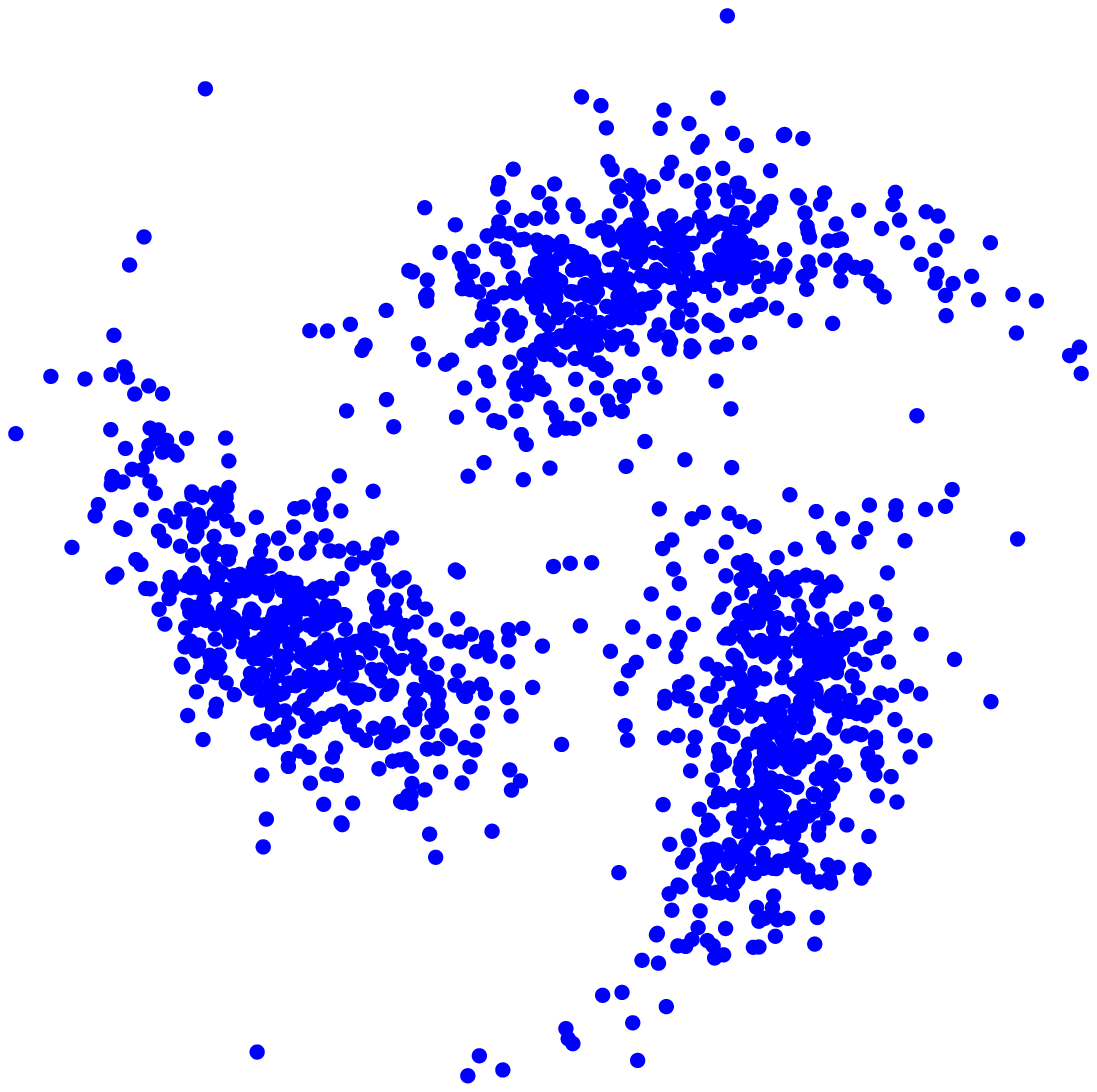}\hfill
\includegraphics[width=0.19\columnwidth,height=0.19\columnwidth]{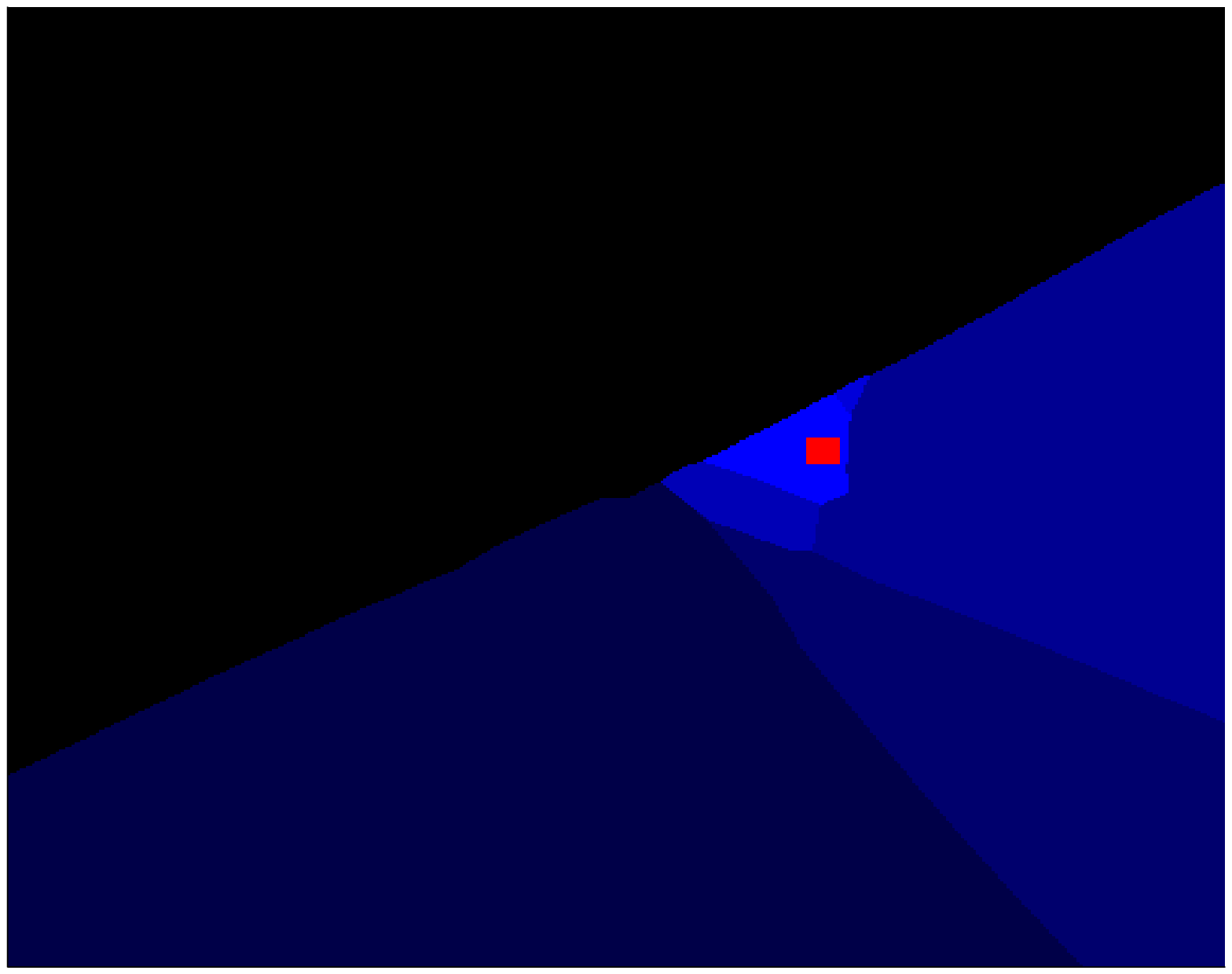}\hfill
\includegraphics[width=0.19\columnwidth,height=0.19\columnwidth]{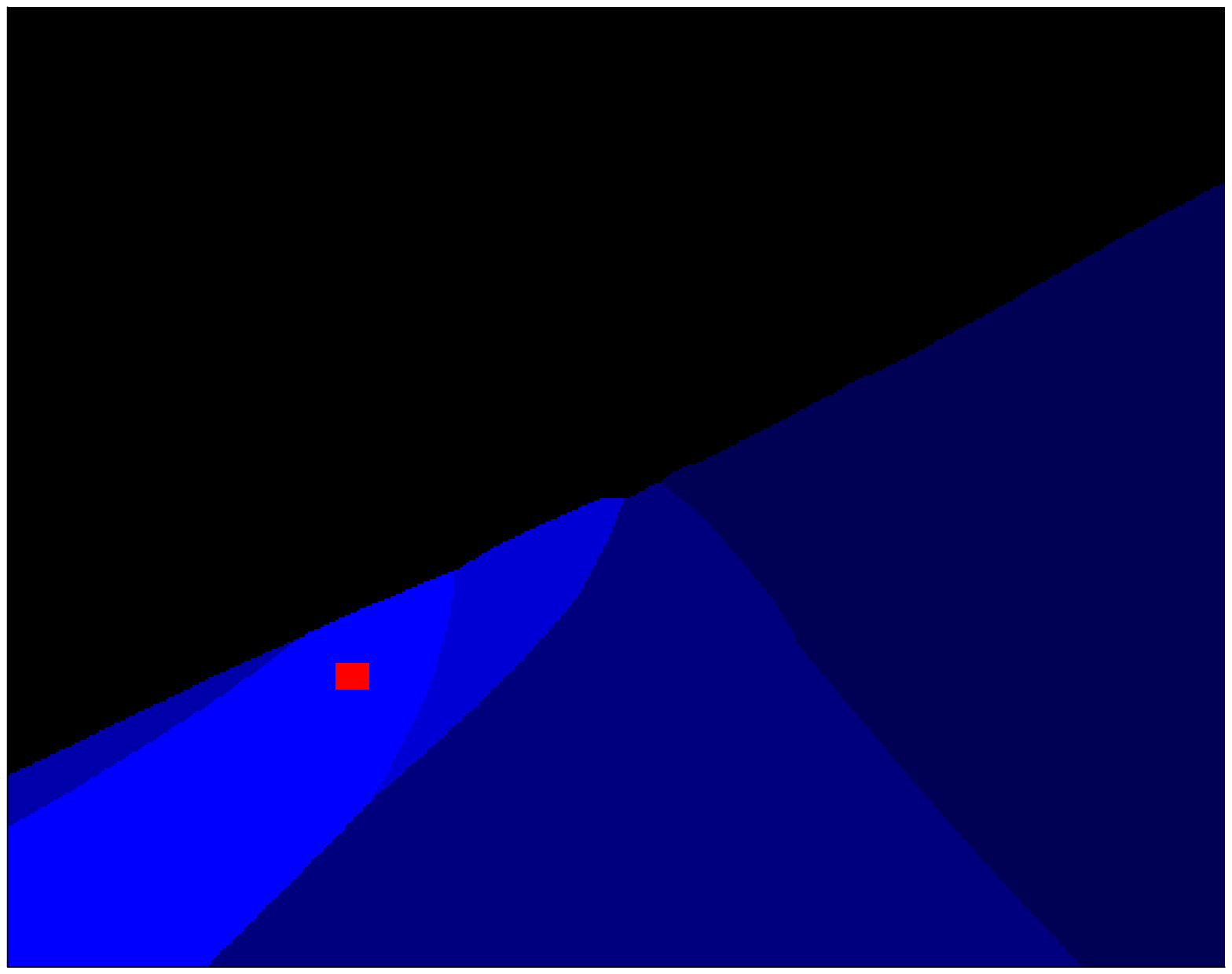}\vspace{0.05in}
\includegraphics[width=0.19\columnwidth,height=0.19\columnwidth]{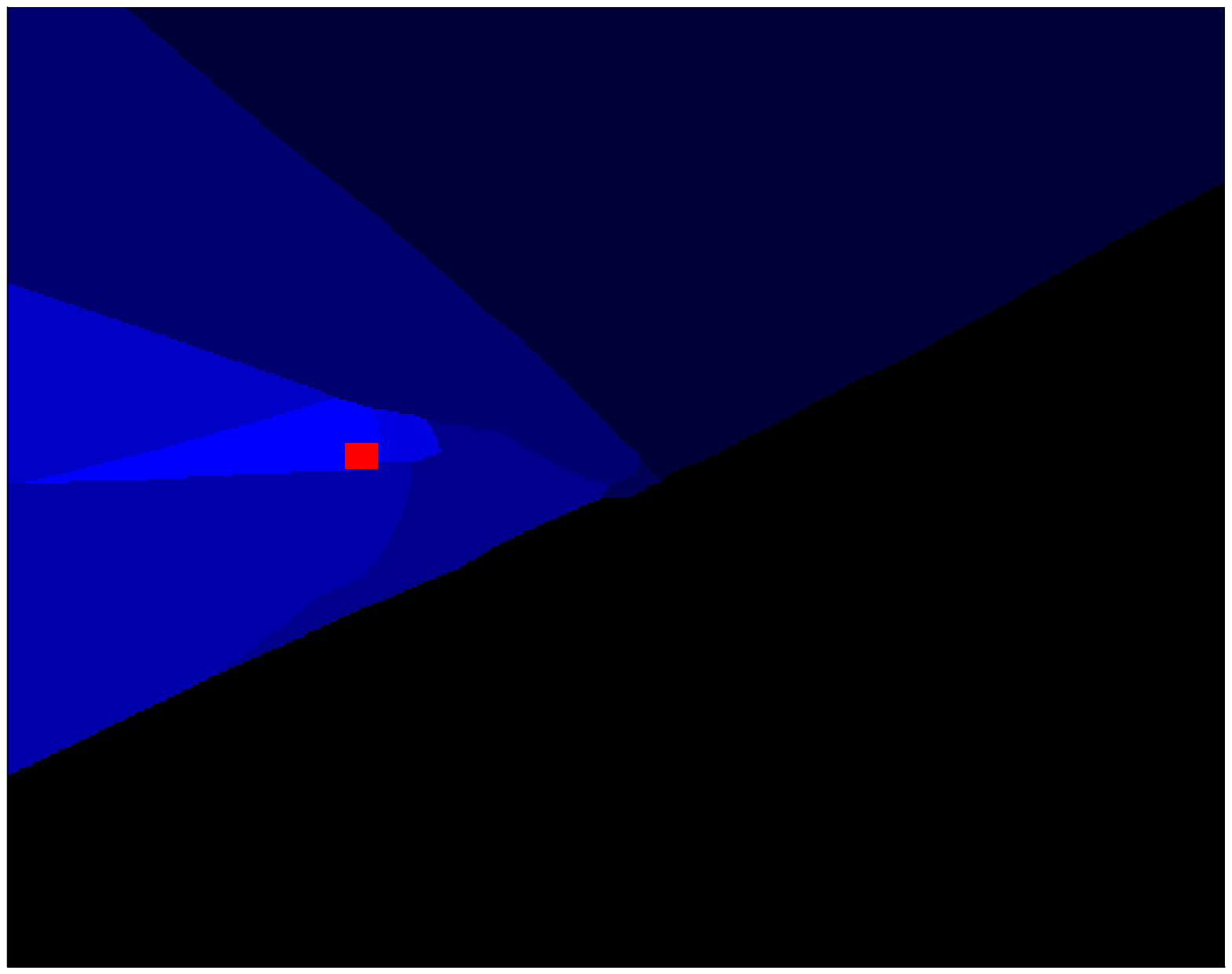}\hfill
\includegraphics[width=0.19\columnwidth,height=0.19\columnwidth]{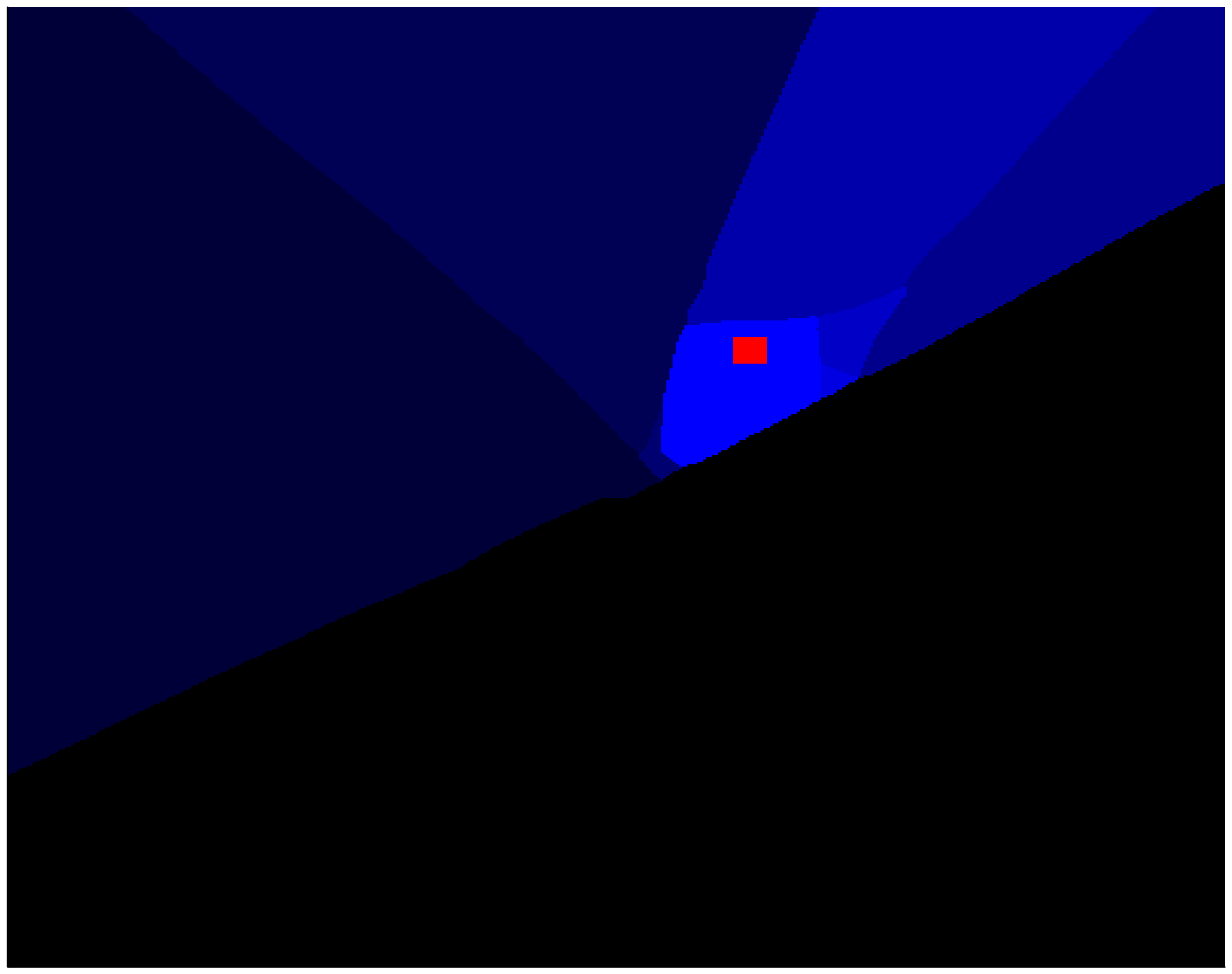}}
\captionv{Nested neighborhoods for various examples in a 2D toy problem. The synthetic pinwheel training data is on the top left. The trained model is a 2-16-32-16-2 nested dropout autoencoder. The red dots correspond to the retrieved queries. The shades of blue correspond to different nested neighborhoods for these queries, with color lightness signifying neighborhood depth.}
    \label{toy}
\end{figure}

\label{implementation}

\section{Retrieval with ordered binary codes}

In this section we discuss how ordered representations can be exploited to construct data structures that permit fast retrieval while at the same time allowing for very long codes.

\subsection{Binary tree on the representation space}

The ordering property, coupled with the ability to control information capacity decay across representation units, motivates the construction of a binary tree over large data sets. Each node in this tree contains pointers to the set of examples that share the same path down the tree up to that point. Guaranteeing that this tree is balanced is not feasible, as this is equivalent to completely characterizing the joint distribution over the representation space. However, by the properties of the training algorithm, we are able to fix the marginal distributions of all the representation bits as $x_k\sim\textrm{Bern}(\beta)$ for some hyperparameter $\beta\in(0,1)$.

Consistent with the training procedure, we encode our database as ${\bX=\{\rmbx_n\}_{n=1}^N\subseteq \{0,1\}^K}$, ${\rmbx_n=\bbf_{\bTheta}(\rmby_n)}$. We then construct a binary tree on the resulting codes.

Given a query $\bar{\rmby}$, we first encode it as ${\barbx=\bbf_{\bTheta}(\bar{\rmby}_n)}$. We then conduct retrieval by traveling down the binary tree with each decision determined by the next bit of $\bar{\bx}$. We define the \emph{$b$-truncated Hamming neighborhood} of $\barbx$ as the set of all examples whose codes share the first $b$ bits of $\barbx$:
\begin{equation}
N^{\mathcal{H}}_b(\barbx)=\left\{ \rmby\in\bY : \norm{\rmbx\up{b}-\barbx\up{b}}_{\mathcal{H}}=0 \right\}\;.
\end{equation}
It is clear that $N^{\mathcal{H}}_{b+1}(\barbx) \subseteq N^{\mathcal{H}}_b(\barbx)\;\forall b=1,\ldots, K-1$. Our retrieval procedure then corresponds to iterating through this family of nested neighborhoods. We expect the cardinality of these to decay approximately exponentially as a function of index. We terminate the retrieval procedure when ${\left| N^{\mathcal{H}}_b(\barbx)\right| < R}$ for some pre-specified terminal neighborhood cardinality, $R\in\mathbb{N}$. It outputs the set $N^{\mathcal{H}}_{b-1}(\barbx)$.

Assuming marginals $x_k\sim\textrm{Bern}(\beta)$ and neglecting dependence between the $x_k$, this results in expected retrieval time~$\mathcal{O}(\frac{\log N/R}{\mathcal{H}(\textrm{Bern}(\beta))})$ where $\mathcal{H}(\textrm{Bern}(\beta))$ is the Bernoulli entropy. If $\beta=\frac{1}{2}$, for example, this reduces to the balanced tree travel time $\mathcal{O}(\log N/R)$.
This retrieval time is logarithmic in the database size $N$, and \emph{independent} of the representation space dimensionality $K$. If one wishes to retrieve a fixed fraction of the dataset, this renders the retrieval complexity also independent of the dataset size.

In many existing retrieval methods, the similarity of two examples is measured by their Hamming distance. Here, similarity is rather measured by the number of leading bits they share. This is consistent with the training procedure, which produces codes with this property by demanding reconstructive ability under code truncation variation. 

\begin{figure}[t!]
    \centering
    \includegraphics[width=0.9\columnwidth]{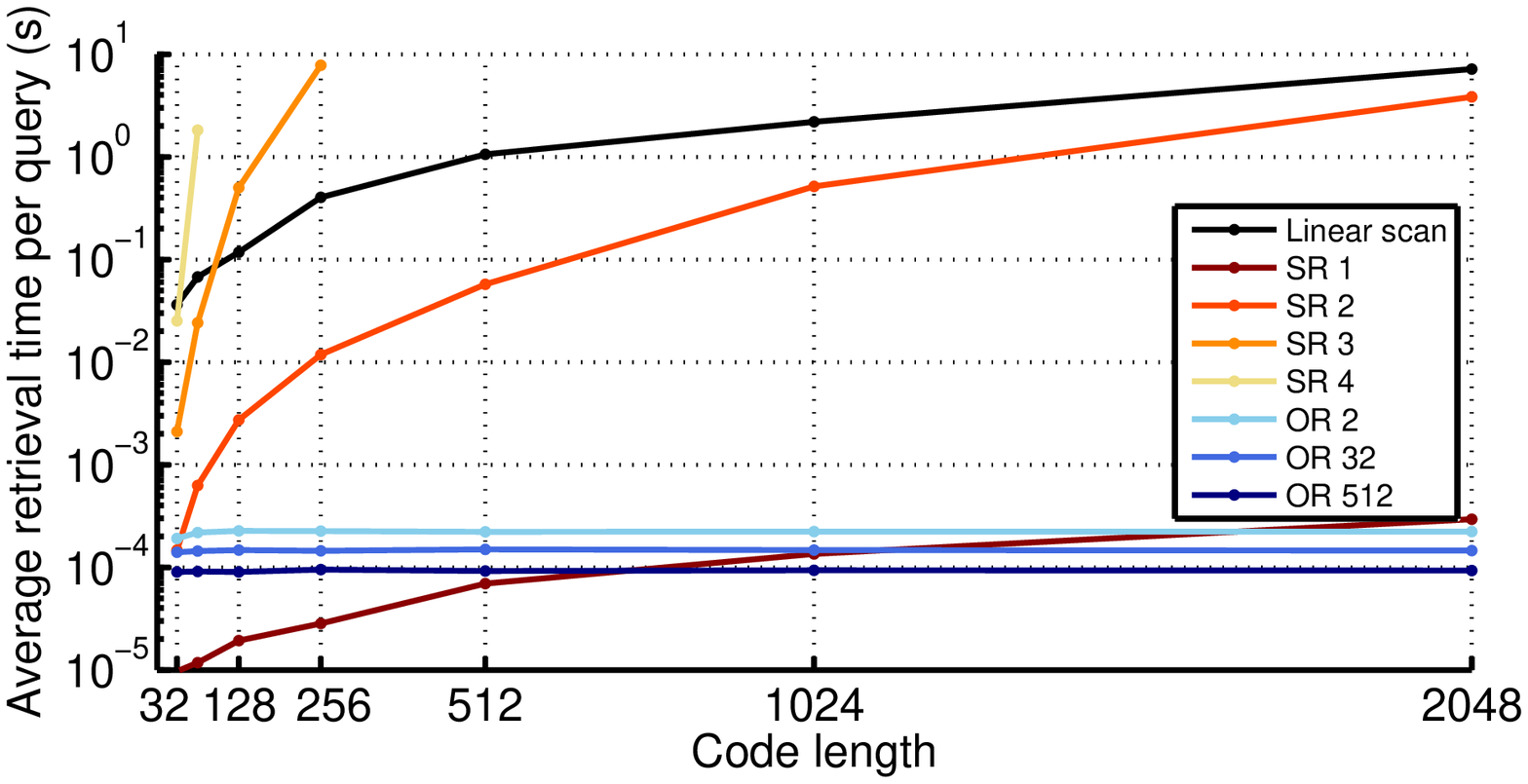}
    \captionv{Empirical timing tests for different retrieval algorithms. SR: semantic retrieval, OR: ordered retrieval. The numbers next to ``SR'' and ``OR'' in the figure legend correspond to the semantic hashing radius and the terminal ordered retrieval neighborhood cardinality, respectively. As the code length increases, Hamming balls of very small radii become prohibitive to scan. Ordered retrieval carries a small fixed cost that is independent of code length.}
    \label{speeds}
\end{figure}

\subsection{Empirical results}
We empirically studied the properties of the resulting codes and data structures in a number of ways. First, we applied ordered retrieval to a toy problem where we trained a tiny 2-16-32-16-2 autoencoder on 2D synthetic pinwheel data (Figure \ref{toy}). Here we can visualize the nesting of neighborhood families for different queries. Note that, as expected, the nested neighborhood boundaries are orthogonal to the direction of local variation of the data. This follows from the model's reconstruction loss function.

We then trained on 80MTI a binarized nested dropout autoencoder with layer widths 3072-2048-1024-512-1024-2048-3072 with $L_1$ weight decay and invariance regularization (see Section \ref{implementation}). We chose~${p_B\cd \sim \textrm{Geom}(0.97)}$ and the binarization quantile~${\beta=0.2}$.

Empirical retrieval speeds for various models are shown in Figure \ref{speeds}. We performed retrieval by measuring Hamming distance in a linear scan over the database, and by means of semantic hashing for a number of radii. We also performed ordered retrieval for a number of terminal neighborhood cardinalities. Although semantic hashing is independent of the database size, for a radius greater than 2 it requires more time than a brute force linear scan even for very short codes. In addition, as the code length increases, it becomes very likely that many queries will not find any neighbors for any feasible radii. It can be seen that ordered retrieval carries a very small computational cost which is independent of the code length. Note that each multiplicative variation in the terminal neighborhood size $R$, from 2 to 32 to 512, leads to a constant shift downward on the logarithmic scale plot. This observation is consistent with our earlier analysis that the retrieval time increases logarithmically with~$N/R$.

In Figure \ref{retrievals}, we show retrieval results for varying terminal neighborhood sizes. As we decrease the terminal neighborhood size, the similarity of the retrieved data to the query increases. As more bits are added to the representation in the process of retrieval, the resolution of the query increases, and thus it is better resolved from similar images. 

\begin{figure}[t!]
\centering
\parbox[t]{\columnwidth}{
    \parbox[t]{0.06\columnwidth}{
    \vspace{-0.25in}
    \tiny $N^{\mathcal{H}}_{8}$ \\

    \vspace{0.26in}
    \tiny $N^{\mathcal{H}}_{32}$ \\

    \vspace{0.29in}
    \tiny $N^{\mathcal{H}}_{128}$ \\

    \vspace{0.18in}
    \tiny $N^{\mathcal{H}}_{256}$ 
    }
    \;\parbox[t]{0.94\columnwidth}{\noindent\includegraphics[width=0.94\columnwidth]{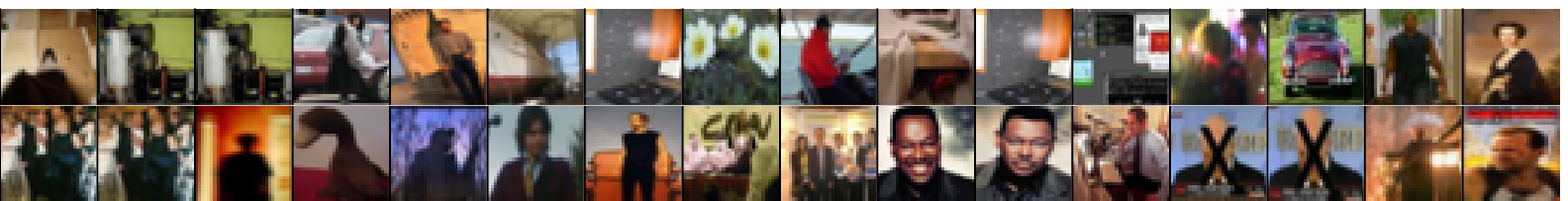}\vspace{0.05in}    
    \includegraphics[width=0.94\columnwidth]{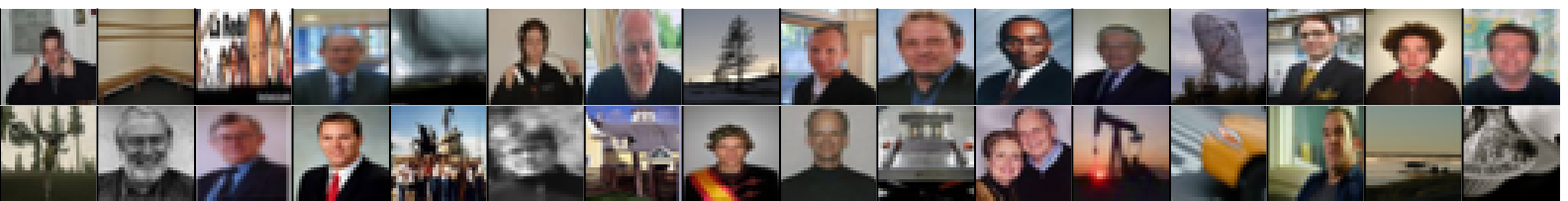}\vspace{0.05in}
    \includegraphics[width=0.94\columnwidth]{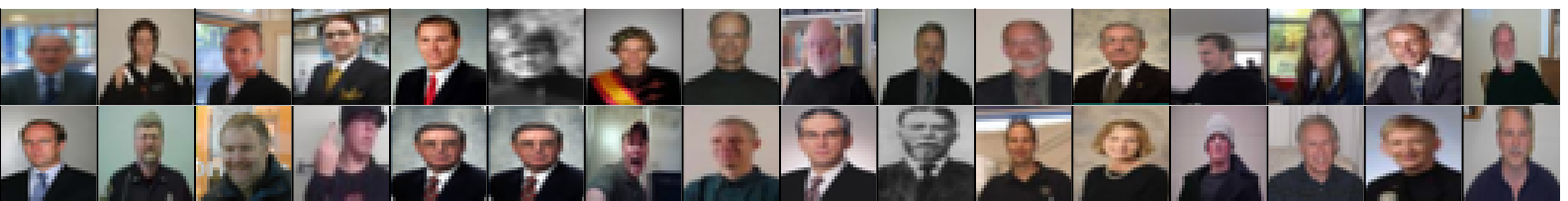}\vspace{0.05in}
    \includegraphics[width=.3525\columnwidth]{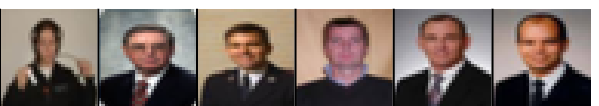}
    \; \parbox[t]{0.06\columnwidth}{\vspace{-0.15in}\tiny $N^{\mathcal{H}}_{311}$}\;\includegraphics[width=.17625\columnwidth]{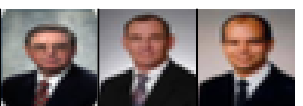}
    \; \parbox[t]{0.06\columnwidth}{\vspace{-0.15in}\tiny $N^{\mathcal{H}}_{465}$}\;\includegraphics[width=.065\columnwidth]{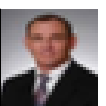}
    }}

    \vspace{0.14in}
\parbox[t]{\columnwidth}{
    \parbox[t]{0.06\columnwidth}{
    \vspace{-0.25in}
    \tiny $N^{\mathcal{H}}_{8}$ \\

    \vspace{0.29in}
    \tiny $N^{\mathcal{H}}_{64}$ \\

    \vspace{0.18in}
    \tiny $N^{\mathcal{H}}_{128}$ 
    }
    \;\parbox[t]{0.94\columnwidth}{\noindent\includegraphics[width=0.94\columnwidth]{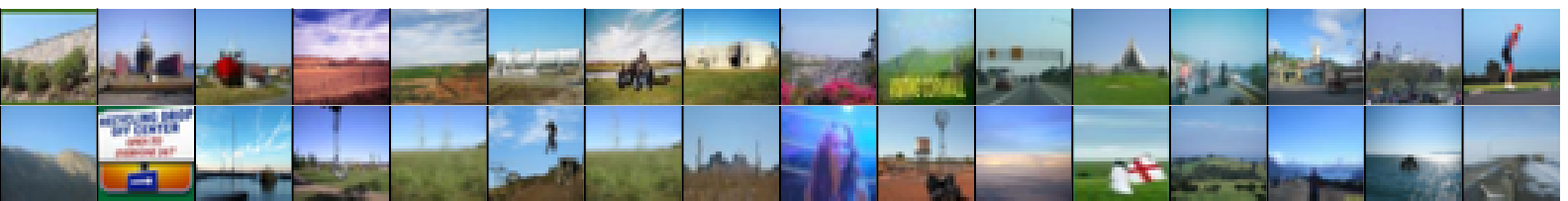}\vspace{0.05in}    
    \includegraphics[width=0.94\columnwidth]{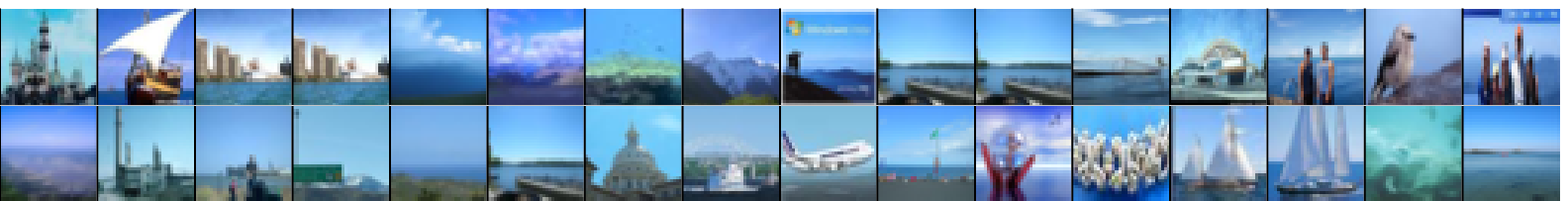}\vspace{0.05in}
    \includegraphics[width=.41125\columnwidth]{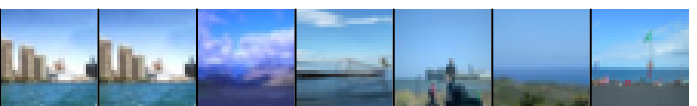}
    \; \parbox[t]{0.06\columnwidth}{\vspace{-0.15in}\tiny $N^{\mathcal{H}}_{158}$}\;\includegraphics[width=.065\columnwidth]{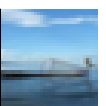}
    }}
    
    \captionv{Retrieval results for different terminal neighborhood cardinalities. Note the increase in retrieval fineness as a function of neighborhood index. Examples presented in the order in which they appear in the dataset. When neighborhood sizes are greater than 32, only the first 32 images in the neighborhood are shown. The last neighborhood contains only the query itself.}
\label{retrievals}
\end{figure}
\label{data_structure}

\section{Adaptive compression}

Another application of ordered representations is continuous-degradation lossy compression systems. By ``continuous-degradation'' we mean that the message can be decoded for any number,~$b$, of bits received, and that the reconstruction error $\mathscr{L}(\rmby,\hat{\rmby}\up{b})$ decreases monotonically with $b$.  Such representations give rise to a continuous (up to a single bit) range of bitrate-quality combinations, where each additional bit corresponds to a small incremental increase in quality. 

The continuous-degradation property is appealing in many situations. First, consider, a digital video signal that is broadcast to recipients with varying bandwidths. Assume further that the probability distribution over bandwidths for the population, $p_B\cd$, is known or can be estimated, and that a recipient with bandwidth $b$ receives only the first $b$ bits of the transmission. We can then pose the following problem: what broadcast signal minimizes the expected distortion over the population? This is formulated as
\begin{equation}
(\bTheta^*,\bPsi^*)=\arg\min_{\bTheta,\bPsi}\bbE_B \left[\mathscr{L}\left(\rmby_n, \hat{\rmby}_{n\downarrow b}\right)\right]\;.\label{comp_eqn}
\end{equation}
This is precisely the optimization problem solved by our model; Equation (\ref{comp_eqn}) is simply a rewriting of Equation (\ref{ndo_problem}). This connection gives rise to an interpretation of $p_B\cd$, which we have set to the geometric distribution in our experiments. In particular, $p_B\cd$ can be interpreted as the distribution over recipient bandwidths such that the system minimizes the expected reconstruction error. 

This intuition in principle applies as well to online video streaming, in which the transmitted signal is destined for only a single recipient. Given that different recipients have different bandwidths, it is acceptable to lower the image quality in order to attain real-time video buffering. Currently, one may specify in advance a small number of fixed encodings for various bandwidths: for example, YouTube offers seven different definitions (240p, 360p, 480p, 720p, 1080p, 1440p, and 2160p), and automatically selects one of these to match the viewer's bitrate. Ordered representations offer the ability to fully utilize the recipient's bandwidth by truncating the signal to highest possible bitrate. Instead of compressing a handful of variants, one needs only to compute the ordered representation once in advance, and truncate it to the appropriate length at transmission time. If this desired bitrate changes over time, the quality could be correspondingly adjusted in a smooth fashion. As above, given a distribution $p_B\cd$ of bandwidths, finding the ordered representation that solves Equation (\ref{comp_eqn}) minimizes the expected distortion over a population of recipients.

\begin{figure}[b]
    \subfigure[\small Reconstructions]
    {\parbox[b]{0.40\columnwidth}{
    \centering
    \includegraphics[width=0.4\columnwidth]{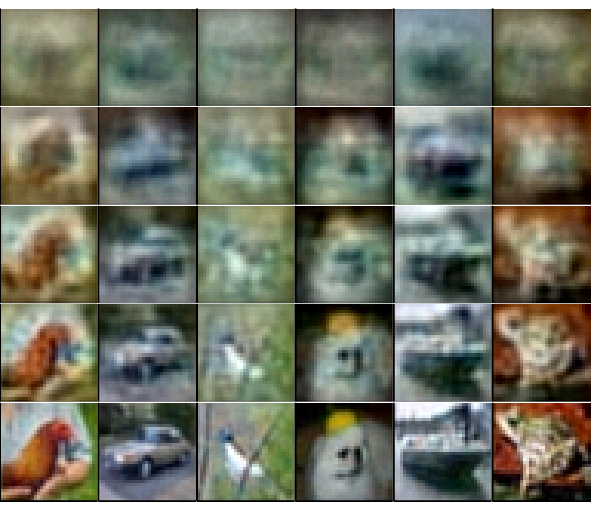}
    \includegraphics[width=0.4\columnwidth]{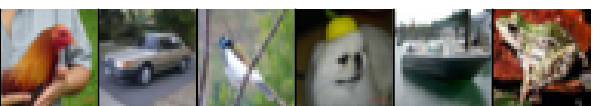}}
    \label{reconst_figures}}
    \hfill
    \subfigure[Reconstruction rates]{
    \centering
    \includegraphics[width=0.5\columnwidth]{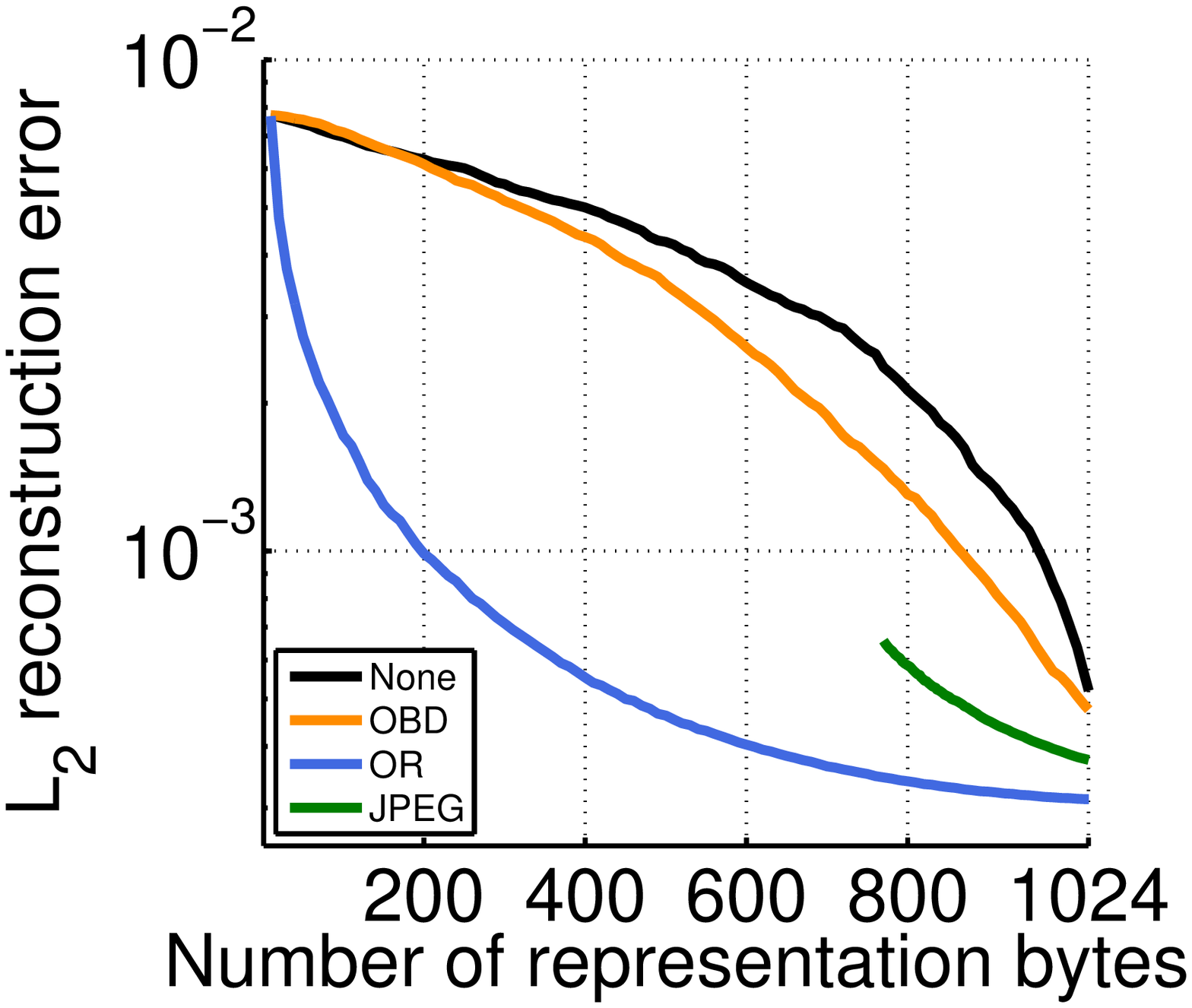}
    \label{reconst_curves}}

    \captionv{Online reconstruction with ordered representations. (a)~ Reconstructions for code lengths 16, 64, 128, 256, and 1024 using a nested dropout autoencoder. The original images are 24576 bits each. (b)~Reconstruction rates as a function of code length for four different truncation techniques: ordered representation, Optimal Brain Damage, a standard autoencoder, and JPEG.}
\end{figure}

\subsection{Empirical results}
In Figure \ref{reconst_figures}, we qualitatively evaluate continuous-degradation lossy compression with ordered representations. We trained a single-layer 3072-1024-3072 autoencoder with nested dropout on CIFAR-10, and produced reconstructions for different code lengths. Each column represents a different image and each row represents a different code length. As the code length increases (downwards in the figure), the reconstruction quality increases. The images second-to-bottom row look very similar to the original uncompressed images in the bottom row (24576 bits each).

Figure \ref{reconst_curves} shows ordered representation reconstruction rates as a function of code length for different approaches to the problem. In addition to the above, we also trained a standard autoencoder with the same architecture but without nested dropout. On this we applied 2 different truncation approaches. The first is a simple truncation on the un-ordered bits. The second is Optimal Brain Damage truncation \citep{optimal_brain_damage}, which removes units in decreasing order of their influence on the reconstruction objective, measured in terms of the first and second order terms in its Taylor expansion. This is a clever way of ordering units, but is disjoint from the training procedure and is only applied retroactively. We also compare with JPEG compression. We use the libjpeg library and vary the JPEG quality parameter. Higher quality parameters result in larger file sizes and lower reconstruction error. Note that JPEG is not suited for the 32x32 pixel images we use in this study; its assumptions about the spectra of natural images are violated by such highly down sampled images which have lost significant low frequency content. When the quality is extended to its maximum, the loss is approximately 0.0003, in the same units (not shown).
\label{compression}

\section{Discussion and future work}

We have presented a novel technique for learning representations in which the dimensions have a known ordering.  This procedure is exactly equivalent to PCA for shallow autoencoders, but can generalize to deep networks as well.  This enables learned representations of data that are adaptive in the sense that they can be truncated with the assurance that the shorter codes contain as much information as possible.  Such codes are of interest in applications such as retrieval and compression.

The ordered representation retrieval approach can also be used for efficient supervised learning. Namely, it allows performing $k$-nearest-neighbors on very long codes in logarithmic time in their cardinality. This idea can be combined with various existing approaches to metric learning of kNN and binarized representations \citep{hamming_learning,nonlin_embedding,distance_learning}.  The purely unsupervised approaches we have described here have not been empirically competitive with state of the art supervised methods from deep learning.  We are optimistic that nested dropout can be meaningfully combined with supervised learning, but leave that for future work.

We also note that ordered representations provide insight into how one might practically train models with an infinite number of latent dimensions, in the spirit of Bayesian nonparametric methods.  For example, the distribution $p_B\cd$ can be chosen to have infinite support, while having finite mean and variance.  Finally, the nesting idea can be generalized to more complicated dependency structures, such as those described in \citet{recursive_cardinality}.

\label{discussion}

\paragraph{Acknowledgements}

We are grateful to Hugo Larochelle for insightful conversations. This work was partially funded by DARPA Young Faculty Award N66001-12-1-4219.

\appendix
\section*{Appendix A. Proofs for Section \ref{pca}}\label{proof_section}

\setcounter{thm}{0}
\setcounter{lemma}{0}

\begin{thm}\label{everysoln}
Every optimal solution of the nested dropout problem is necessarily an optimal solution of the standard autoencoder problem.
\end{thm}

\begin{proof}
Let the nested dropout autoencoder be of latent dimension $K$. Recall that the nested dropout objective function in Equation (\ref{ndo_objective}) is a strictly positive mixture of the $K$ different $b$-truncation problems. As described in Subsection \ref{defns}, an optimal solution to each $b$-truncation must be of the form $\bX_b^* = \bT_b \bSigma\up{b} \bR^T, \bGamma_b^* = \bQ\up{b}\bT^{-1}_b$ for some invertible transformation $\bT_b$. We note that the PCA decomposition is a particular optimal solution for each $b$ that is given for the choice $\bT_b=\mathbb{I}_b$. As such, the PCA decomposition exactly minimizes \emph{every} term in the nested dropout mixture, and therefore must be a global solution of the nested dropout problem. This means that every optimal solution of the nested dropout problem must exactly minimize every term in the nested dropout mixture. In particular, one of these terms corresponds to the $K$-truncation problem, which is in fact the original autoencoder problem. 
\end{proof}

Denote $\bT\up{b}=\trJ{K}{b}\bT\trJ{K}{b}^T$ as the $b$-th leading principal minor and its its bottom right corner as $t_b=T_{bb}$. 

\begin{lemma}\label{lemma}
Let $\bT\in\reals^{K\times K}$ be commutative in its truncation and inversion. Then all the diagonal elements of $\bT$ are nonzero, and for each $b=2,\ldots,K$, either $\bA_b=\textbf{0}$ or $\bB_b=\textbf{0}$.
\end{lemma}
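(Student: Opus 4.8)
The plan is to work entirely with the nested family of leading principal minors and to show that the commutativity hypothesis forces a rank-one object to vanish at each level. Fix $b\in\{2,\ldots,K\}$ and write the $b$-th leading principal minor in block form as
\begin{equation*}
\bT\up{b} = \begin{pmatrix} \bT\up{b-1} & \bA_b \\ \bB_b^T & t_b \end{pmatrix},
\end{equation*}
so that the off-diagonal border is exactly the vectors $\bA_b$ and $\bB_b$ of the statement and $t_b=T_{bb}$. By hypothesis every $\bT\up{b}$ is invertible, so in particular the Schur complement $s_b = t_b - \bB_b^T (\bT\up{b-1})^{-1}\bA_b$ is a nonzero scalar, since $\det\bT\up{b} = s_b\det\bT\up{b-1}$ and both determinants are nonzero.

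The key step is to extract information from the commutativity condition by comparing it at two consecutive levels. Applying the hypothesis at level $b$ shows that the leading $b\times b$ block of $\bT^{-1}$ is exactly $(\bT\up{b})^{-1}$; restricting further to the leading $(b-1)\times(b-1)$ corner and invoking the hypothesis at level $b-1$ shows that the leading $(b-1)\times(b-1)$ block of $(\bT\up{b})^{-1}$ must equal $(\bT\up{b-1})^{-1}$. On the other hand, the standard block-inversion formula gives this same corner explicitly as $(\bT\up{b-1})^{-1} + (\bT\up{b-1})^{-1}\bA_b\, s_b^{-1}\, \bB_b^T (\bT\up{b-1})^{-1}$. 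Equating the two expressions and cancelling the invertible factors $(\bT\up{b-1})^{-1}$ on each side, together with the nonzero scalar $s_b^{-1}$, reduces the entire condition to the single rank-one identity $\bA_b \bB_b^T = \textbf{0}$.

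From here both conclusions follow quickly. An outer product $\bA_b\bB_b^T$ of two vectors vanishes precisely when one of the factors is the zero vector, which yields the dichotomy $\bA_b=\textbf{0}$ or $\bB_b=\textbf{0}$ for each $b$. For the diagonal, note first that $T_{11}=\bT\up{1}\neq 0$ by invertibility of the first minor, and for $b\geq 2$ the dichotomy just proven forces the cross term $\bB_b^T(\bT\up{b-1})^{-1}\bA_b$ to vanish, so the Schur complement collapses to $s_b=t_b=T_{bb}$; since $s_b\neq 0$, every diagonal entry is nonzero. I expect the main obstacle to be purely bookkeeping: correctly threading the two instances of the commutativity condition (at levels $b$ and $b-1$) through the block-inverse formula, and being careful that the restriction of $\bT^{-1}$ to a leading corner is consistent across levels so that the comparison is legitimate.
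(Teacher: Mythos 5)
Your proof is correct and follows essentially the same route as the paper's: block decomposition of $\bT\up{b}$, nonvanishing of the Schur complement via the determinant identity, and blockwise inversion combined with the commutativity hypothesis to force $\bA_b\bB_b^T=\textbf{0}$. The only (minor, equally valid) divergence is in the diagonal claim, where you observe directly that the vanishing cross term collapses the nonzero Schur complement to $t_b$, whereas the paper argues by contradiction that $t_b=0$ would create an all-zero row or column in $\bT\up{b}$.
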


\begin{proof}
We have $\det\bT\up{b} = \det\bT\up{b-1}\det(t_b-\bB_b \bT\up{b-1}^{-1}\bA_b)\neq 0$ since $\bT\up{b-1}$ is invertible. Since $\bT\up{b-1}$ is also invertible, then $t_b-\bB_b \bT\up{b-1}^{-1}\bA_b\neq 0$. As such, we write $\bT\up{b}$ in terms of blocks $\bT\up{b-1}, \bA_b, \bB_b, t_b$, and apply blockwise matrix inversion to find that $\bT\up{b-1}^{-1}=\bT\up{b-1}^{-1}+\bT\up{b-1}^{-1}\bA_b(t_b-\bB_b\bT\up{b-1}^{-1}\bA_b)^{-1}\bB_b\bT\up{b-1}^{-1}$ which reduces to $\bA_b \bB_b=\textbf{0}$. Now, assume by contradiction that $t_b=0$. This means that either bottom row or the rightmost column of $\bT\up{b}$ must be all zeros, which contradicts with the invertibility of $\bT\up{b}$.
\end{proof}

\begin{thm}\label{soln_constraints}
Every optimal solution of the nested dropout problem must be of the form
\begin{eqnarray}
\bX^* &=& \bT \bSigma \bR^T\\
\bGamma^* &=& \bQ \bT^{-1}\;,
\end{eqnarray}
for some matrix $\bT\in\reals^{K\times K}$ that is commutative in its truncation and inversion.
\end{thm}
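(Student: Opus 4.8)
The plan is to lean entirely on Theorem~\ref{everysoln}. Since the nested dropout objective~(\ref{ndo_objective}) is a strictly positive mixture of the $K$ truncation costs and the PCA decomposition minimizes every term at once, any nested dropout optimum $(\bX,\bGamma)$ must be optimal for \emph{every} $b$-truncation simultaneously. I can therefore invoke the characterization~(\ref{soln_form}) of \citet{autoassociation} for all $K$ truncations at the same time, exploiting the fact that the truncated encoder output $\bX\up{b}=\trJ{K}{b}\bX$ and the truncated decoder $\bGamma\up{b}=\bGamma\trJ{K}{b}^T$ are \emph{leading sub-blocks} of a single shared pair $(\bX,\bGamma)$. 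First I would extract the global form by taking $b=K$: the $K$-truncation is the full autoencoder, so~(\ref{soln_form}) gives an invertible $\bT:=\bT_K\in\reals^{K\times K}$ with $\bX=\bT\bSigma\bR^T$ and $\bGamma=\bQ\bT^{-1}$ (using $\bSigma\up{K}=\bSigma$ and $\bQ\up{K}=\bQ$). This already yields the claimed form, so everything reduces to proving $\trJ{K}{b}\bT^{-1}\trJ{K}{b}^T=(\trJ{K}{b}\bT\trJ{K}{b}^T)^{-1}$ for each $b$.

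For the encoder side, optimality of the $b$-truncation gives an invertible $\bT_b$ with $\bX\up{b}=\bT_b\bSigma\up{b}\bR^T$. Since $\bX\up{b}=\trJ{K}{b}\bT\bSigma\bR^T$ and $\bSigma\up{b}=\trJ{K}{b}\bSigma$, this reads $\trJ{K}{b}\bT\,\bSigma\bR^T=\bT_b\trJ{K}{b}\,\bSigma\bR^T$. Assuming $\mathrm{rank}(\bY)\ge K$ so the top $K$ eigenvalues are nonzero, $\bSigma\bR^T$ has full row rank and is right-cancellable, giving $\trJ{K}{b}\bT=\bT_b\trJ{K}{b}$; right-multiplying by $\trJ{K}{b}^T$ and using $\trJ{K}{b}\trJ{K}{b}^T=\mathbb{I}_b$ yields $\bT\up{b}=\bT_b$, so in particular every leading block $\bT\up{b}$ is invertible. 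Symmetrically, on the decoder side optimality gives $\bGamma\up{b}=\bQ\up{b}\bT_b^{-1}=\bQ\trJ{K}{b}^T\bT_b^{-1}$, while $\bGamma\up{b}=\bQ\bT^{-1}\trJ{K}{b}^T$. Left-multiplying by $\bQ^T$ and using $\bQ^T\bQ=\mathbb{I}_K$ removes $\bQ$ to give $\bT^{-1}\trJ{K}{b}^T=\trJ{K}{b}^T\bT_b^{-1}$, and a further left-multiplication by $\trJ{K}{b}$ produces $\trJ{K}{b}\bT^{-1}\trJ{K}{b}^T=\bT_b^{-1}$. Combining the two block identities gives $\trJ{K}{b}\bT^{-1}\trJ{K}{b}^T=\bT_b^{-1}=(\bT\up{b})^{-1}=(\trJ{K}{b}\bT\trJ{K}{b}^T)^{-1}$ for all $b$, which is precisely commutativity in truncation and inversion.

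The main obstacle I anticipate is keeping the two cancellations honest: both $\bSigma\bR^T$ (cancelled on the right) and $\bQ$ (cancelled on the left) are rectangular, so right- and left-invertibility must be justified through the full-rank and orthonormality conditions rather than by naive division. The essential device is to collapse each equation onto its $b\times b$ leading corner by sandwiching with $\trJ{K}{b}(\cdot)\trJ{K}{b}^T$; this extracts exactly the block relations $\bT\up{b}=\bT_b$ and $(\bT^{-1})\up{b}=\bT_b^{-1}$ while harmlessly discarding the off-block content, and it is what keeps the conclusion at commutativity rather than overshooting. A secondary point requiring care is that the \emph{same} $\bT_b$ must appear in the encoder and decoder forms of~(\ref{soln_form}); this shared factor is exactly what couples the two computations and lets them be composed into the final commutativity relation.
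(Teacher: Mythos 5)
Your proposal is correct and follows essentially the same route as the paper's proof: invoke Theorem~\ref{everysoln} to get simultaneous optimality of every $b$-truncation, identify $\bT_b\trJ{K}{b}=\trJ{K}{b}\bT_K$ from the encoder equation to conclude $\bT_b=\bT\up{b}$, and read off commutativity in truncation and inversion from the decoder equation. The only difference is that you make explicit the right-cancellation of $\bSigma\bR^T$ and left-cancellation of $\bQ$ (with the attendant rank assumption), which the paper leaves implicit.
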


\begin{proof}
Consider an optimal solution $\bX^*, \bGamma^*$ of the nested dropout problem. For \emph{each} $b$-truncation, as established in the proof of Theorem \ref{everysoln}, it must hold that 
\begin{eqnarray}
\bX_b^* &=& \bT_b \trJ{K}{b}\bSigma\bR^T \\
\bGamma_b^* &=& \bQ \trJ{K}{b}^T\bT^{-1}_b\;.
\end{eqnarray}
However, it must also be true that $\bX_b=\bX\up{b}, \bGamma_b=\bGamma\up{b}$ by the definition of the nested dropout objective in Equation (\ref{ndo_objective}). The first equation thus gives that $\bT_b \trJ{K}{b}=\trJ{K}{b} \bT_K$, and therefore $\bT_b=\trJ{K}{b} \bT_K \trJ{K}{b}^T=\bT\up{b}$. This establishes the fact that the optimal solution for each $b$-truncation problem simply draws the $b$-th leading principal minor from the \emph{same} ``global'' matrix $\bT:=\bT_K$. The second equation implies that for every $b$, it holds that $\trJ{K}{b}\bT^{-1}\trJ{K}{b}^T = (\trJ{K}{b}\bT\trJ{K}{b}^T)^{-1}$ and as such $\bT$ is commutative in its truncation and inversion.
\end{proof}

\begin{thm}
Under the orthonormality constraint $\bGamma^T\bGamma=\mathbb{I}_K$, there exists a unique optimal solution for the nested dropout problem, and this solution is exactly the set of the $K$ top eigenvectors of the covariance of $\bY$, ordered by eigenvalue magnitude. Namely, $\bX^*=\bSigma\bR^T, \bGamma^*=\bQ$.
\end{thm}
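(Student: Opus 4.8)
The plan is to start from the characterization already obtained in Theorem~\ref{soln_constraints}: every optimal solution has the form $\bX^* = \bT\bSigma\bR^T$, $\bGamma^* = \bQ\bT^{-1}$ for some $\bT\in\reals^{K\times K}$ that is commutative in its truncation and inversion. The orthonormality constraint is then one extra matrix equation that should pin $\bT$ down completely. First I would substitute the form of $\bGamma^*$ into the constraint and simplify using the orthonormality of the eigenvector columns, $\bQ^T\bQ = \mathbb{I}_K$:
\begin{align}
\bGamma^{*T}\bGamma^* = \bT^{-T}\bQ^T\bQ\bT^{-1} = \bT^{-T}\bT^{-1} = \mathbb{I}_K,
\end{align}
so that $\bT^{-1}$, and hence $\bT$ itself, must be orthogonal. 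Thus the admissible matrices are exactly the orthogonal $\bT$ that are commutative in their truncation and inversion.

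Next I would use orthogonality to sharpen the commutativity condition. Since $\bT^{-1} = \bT^T$ for an orthogonal matrix, and since the leading principal minor of a transpose is the transpose of the leading principal minor ($\trJ{K}{b}\bT^T\trJ{K}{b}^T = (\trJ{K}{b}\bT\trJ{K}{b}^T)^T$), the defining identity $\trJ{K}{b}\bT^{-1}\trJ{K}{b}^T = (\trJ{K}{b}\bT\trJ{K}{b}^T)^{-1}$ becomes $(\bT\up{b})^T = (\bT\up{b})^{-1}$. In other words, the hypotheses force \emph{every} leading principal minor $\bT\up{b}$ to be orthogonal, not merely $\bT\up{K}=\bT$.

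The crux is then a descending induction on $b$ showing that an orthogonal matrix all of whose leading principal minors are orthogonal must be diagonal. Writing $\bT\up{b}$ in block form with top-left corner $\bT\up{b-1}$, off-diagonal blocks $\bA_b,\bB_b$, and bottom-right entry $t_b$, I would compare norms: for $j<b$, column $j$ of $\bT\up{b}$ is a unit vector by orthogonality of $\bT\up{b}$, while its truncation (column $j$ of $\bT\up{b-1}$) is a unit vector by orthogonality of $\bT\up{b-1}$; as the two differ only in the entry $T_{bj}$, this forces $T_{bj}=0$, i.e.\ $\bB_b=\mathbf{0}$, and the symmetric row comparison forces $\bA_b=\mathbf{0}$. (Equivalently, Lemma~\ref{lemma} already gives $\bA_b=\mathbf{0}$ \emph{or} $\bB_b=\mathbf{0}$, and orthogonality upgrades this to both vanishing.) Hence $\bT\up{b}$ is block diagonal with $t_b=\pm 1$, and peeling off one index at a time down to $\bT\up{1}=(t_1)$ shows $\bT=\mathrm{diag}(t_1,\ldots,t_K)$ with each $t_i=\pm 1$.

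Finally I would resolve the residual sign freedom. Substituting $\bT=\mathrm{diag}(\pm 1)$ gives $\bX^*=\bT\bSigma\bR^T$ and $\bGamma^*=\bQ\bT$, which flip the signs of matched rows of $\bX^*$ and columns of $\bGamma^*$ while leaving the reconstruction $\bGamma^*\bX^*=\bQ\bSigma\bR^T$, and hence the objective, unchanged; this is precisely the intrinsic sign ambiguity of eigenvectors. Fixing the conventional sign of each eigenvector (equivalently taking $\bT=\mathbb{I}_K$) yields the claimed unique optimum $\bX^*=\bSigma\bR^T$, $\bGamma^*=\bQ$. I expect the main obstacle to be the inductive diagonalization step, where one must carefully bookkeep which truncated rows and columns are forced to vanish; the observation that truncating a column of an orthogonal matrix cannot preserve its unit norm unless the removed entries are zero is what makes the argument go through.
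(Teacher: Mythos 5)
Your proposal is correct and follows essentially the same route as the paper's proof: derive $\bT^T=\bT^{-1}$ from the constraint via $\bQ^T\bQ=\mathbb{I}_K$, use commutativity in truncation and inversion to conclude that every leading principal minor $\bT\up{b}$ is itself orthogonal, and then force the off-diagonal entries to vanish by observing that truncating a unit-norm row or column of an orthogonal matrix cannot preserve its norm unless the removed entries are zero (the paper phrases this as a single contradiction on one nonzero $T_{mn}$ rather than your descending induction, but the mechanism is identical). The one place you genuinely diverge is the endgame: the paper asserts the diagonal entries are ``identically 1'' and concludes $\bT=\mathbb{I}_K$ outright, whereas you correctly note that orthogonality only forces $t_i=\pm 1$, that the residual sign flips leave $\bGamma^*\bX^*$ and hence the objective unchanged, and that this is the intrinsic sign ambiguity of eigenvectors which must be fixed by convention. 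Your treatment of this last step is more careful than the paper's; strictly speaking the claimed uniqueness holds only modulo these sign choices, and your proposal is the one that makes this explicit.
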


\begin{proof}
The orthonormality constraint implies $(\bT^{-1}\bQ)^T\bQ\bT^{-1}=\mathbb{I}_K$ which gives $\bT^T=\bT^{-1}$. Hence every row and every column must have unit norm. We also have have that for every $b=1,\ldots,K$
\begin{eqnarray}
\bT\up{b}^T&=&(\trJ{K}{b}\bT\trJ{K}{b}^T)^T\\
&=&\trJ{K}{b}\bT^T\trJ{K}{b}^T \\
&=&\trJ{K}{b}\bT^{-1}\trJ{K}{b}^T \\
&=&(\trJ{K}{b}\bT\trJ{K}{b}^T)^{-1}\\
&=&\bT\up{b}^{-1}
\end{eqnarray}
where in the last equation we applied Lemma \ref{lemma} to Theorem \ref{soln_constraints}. As such, every leading principal minor is also orthonormal. For the sake of contradiction, assume there exist some $m,n,m\neq n$ such that $T_{mn}\neq 0$. Without loss of generality assume $m<n$. Then $\sum_{p=1}^{n-1}T_{mp}^2<1$, but this violates the orthonormality of $\bT_{n-1}$. Thus it must be that the diagonal elements of $\bT$ are all identically 1, and therefore $\bT=\mathbb{I}_K$. The result follows.
\end{proof}

\bibliography{ordered}
\bibliographystyle{icml2014}

\end{document}